\newtheorem{theorem}{Theorem}
\newtheorem{definition}[theorem]{Definition}
\newtheorem{lem}[theorem]{Lemma}
\newtheorem{corollary}[theorem]{Corollary}
\newcommand{\mb}{\mathbf}
\newcommand{\mc}{\mathcal}
\newcommand\numberthis{\addtocounter{equation}{1}\tag{\theequation}}
\let\oldnl\nl
\newcommand{\nonl}{\renewcommand{\nl}{\let\nl\oldnl}}
\DeclareMathOperator*{\argmin}{arg\,min}
\DeclareMathOperator*{\vcdim}{VC-Dim}
\title{Semi-supervised clustering for de-duplication}
\author{\normalsize
{Shrinu Kushagra} {\textnormal {,}} {Shai Ben-David} {\textnormal {and}} {Ihab Ilyas} \\
\normalsize David R. Cheriton School of Computer Science \\
\normalsize University of Waterloo, Waterloo, Ontario, Canada\\
\normalsize \{skushagr,shai,ilyas\}@uwaterloo.ca \\
}
\date{}
\begin{document}
\maketitle
\begin{abstract}
Data de-duplication is the task of detecting multiple records that correspond to the same real-world entity in a database. In this work, we view de-duplication as a clustering problem where the goal is to put records corresponding to the same physical entity in the same cluster and putting records corresponding to different physical entities into different clusters.

We introduce a framework which we call promise correlation clustering. Given a complete graph $G$ with the edges labelled $0$ and $1$, the goal is to find a clustering that minimizes the number of $0$ edges within a cluster plus the number of $1$ edges across different clusters (or correlation loss). The optimal clustering can also be viewed as a complete graph $G^*$ with edges corresponding to points in the same cluster being labelled $0$ and other edges being labelled $1$. Under the promise that the edge difference between $G$ and $G^*$ is ``small", we prove that finding the optimal clustering (or $G^*$) is still NP-Hard. \cite{ashtiani2016clustering} introduced the framework of semi-supervised clustering, where the learning algorithm has access to an oracle, which answers whether two points belong to the same or different clusters. We further prove that even with access to a same-cluster oracle, the promise version is NP-Hard as long as the number queries to the oracle is not too large ($o(n)$ where $n$ is the number of vertices). 

Given these negative results, we consider a restricted version of correlation clustering. As before, the goal is to find a clustering that minimizes the correlation loss. However, we restrict ourselves to a given class $\mc F$ of clusterings. We offer a semi-supervised algorithmic approach to solve the restricted variant with success guarantees. 
\end{abstract}

\section{Introduction}
Record de-duplication is a central task in data cleaning in large data bases. Common practical examples include the detection of records referring to the same patient in large health data bases (different records might have been generated for same patient in different clinics or even in the same clinic at different times), detecting same person records in census data, detecting customer records, duplicate records of papers in Google Scholar and so on and so forth \cite{elmagarmid2007duplicate}, \cite{chu2016distributed}, \cite{ilyas2015trends}.

Since the same-entity relation is reflexive symmetric and transitive, the sets of duplicate records can be viewed as clusters. Consequently, the record de-duplication task can be viewed as a clustering task. Such a clustering task has several characteristics that make it hard to address with common clustering tools; The number of ground truth clusters is unknown to the algorithm. Furthermore, one cannot a priory bias the algorithm towards a larger or a smaller number of clusters (unlike, say, facility location tasks in which it makes sense to trade off cluster cohesiveness with the number of clusters). This implies that attempts to use standard classification prediction learning tools to predict which pairs or records should be labelled `same-cluster' and which should be `different clusters' (D) are bound to fail - uniformly drawn samples of pairs are likely to be all labeled D and the resulting constant "all D" classifier will have negligible $0-1$ error over the set of pairs. On top of that all, there is no a priori geometry to the clusters structure, one cannot justify common simplifying assumptions like some stability of the clustering, some convexity of the larger-than-two sized clusters  or significant between-clusters margins.

The framework of correlation clustering extends naturally to the data de-duplication problem. Given a complete graph $G$ vertices where each edge is labelled as a $1$ or a $0$, the goal is to cluster the vertices of the graph so as to correlate “as much as possible” to the edges of the graph. That is find a clustering so as to minimize the number of $0$ edges within a cluster plus the number of $1$ edges across different clusters (or correlation loss). An edge label $0$ indicates that the two records have been deemed to be different while $1$ indicates that the records are similar. However, finding the clustering with minimum correlation loss is known to be NP-Hard \cite{bansal2004correlation}. 

One characteristic of record de-duplication which makes it different from other clustering tasks is the applicability of `human supervision'. For example, given two records from a medical database or two papers from DBLP or two citizens from a census data, it is fairly easy for a human to identify whether these records refer to the same physical entity. The framework of correlation clustering does not take this into account. Most prevalent approaches for data de-duplication are based on designing a similarity measure (or distance) over the records, such that records that are highly similar according to that measure are likely to be duplicate and records that measure as significantly dissimilar are likely to represent different entities. In other words, the edge labels are `close to' the underlying ground truth clustering. This is another aspect of data de-duplication which the current correlation clustering framework does not take into account. 
  
In this paper we offer a formal modelling of such record de-duplication tasks. Our framework is the same as correlation clustering but with the added \textit{promise} that the input graph edges $E$ is `close to' the optimal correlation clustering of the given dataset. We analyse the computational complexity of this problem and show that even under strong promise, correlation clustering is NP-Hard. Moreover, the problem remains NP-Hard even when we are allowed to make queries to a human expert (or an \textit{oracle}) as long as the number of queries is not too large (less than the number of points in the dataset). 

Given these negative results, we propose a \textit{restricted} variant of correlation clustering. Here, instead of finding the best clustering from the class of all possible clusterings, the learning algorithm has to choose the best clustering from a given class $\mc F$ of clusterings. We offer an algorithmic approach (which uses the help of an oracle) with success guarantees for the restricted version. The `success guarantee' depends on the complexity of the class $\mc F$ (measured by $\vcdim(\mc F)$) as well as the `closeness' of the metric $d$ to the target clustering.  

\subsection{Related Work}
The most relevant work is the framework of correlation clustering developed by \cite{bansal2004correlation} that we discussed in the previous section. Other variations of correlation clustering have been considered. For example \cite{demaine2006correlation}, consider a problem where the edges can be labelled by a real number instead of just $0$ or $1$. Edges with large positive weights encourage those vertices to be in the same cluster while edges with large negative weights encourage those points to be in different clusters. They showed that the problem is NP-Hard and gave a $O(\log n)$ approximation to the weighted correlation clustering problem. \cite{charikar2005clustering} made several contributions to the correlation clustering problem. For the problem of minimizing the correlation clustering loss (for unweighted complete graphs), they gave an algorithm with factor $4$ approximation. They also proved that the minimization problem is APX-Hard. 

More recently, \cite{ailon2018approximate} considered the problem of correlation clustering in the presence of an oracle. If the number of clusters $k$ is known, they proposed an algorithm which makes $O(k^{14} \log n)$ queries to the oracle and finds a $(1+\epsilon)$-approximation to the correlation clustering problem. They showed that the problem is NP-Hard to approximate with $o\big(\frac{k}{poly \log k}\big)$ queries to an oracle. In this work, we obtain similar results for the {promise correlation clustering} problem.

Supervision in clustering has been addressed before. For example, \cite{kulis2009semi,basu2004probabilistic,basu2002semi} considered {\em link/don't-link} constraints. This is a form of non-interactive clustering where the algorithm gets as input a list of pairs which should be in the same cluster and a list pairs which should be in different clusters. \cite{balcan2008clustering} developed a framework of interactive clustering where the supervision is provided in the form of {\em split/merge} queries. The algorithm gives the current clustering to the oracle. The oracle responds by telling the which clusters to merge and which clusters to split. 

In this work, we use the framework of same-cluster queries developed by \cite{ashtiani2016clustering}. At any given instant, the clustering algorithm asks the same-cluster oracle about two points in the dataset. The oracle replies by answering either `yes' or `no' depending upon whether the two points lie in the same or different clusters. 

On de-duplication side, most prevalent are approaches that are based on designing a similarity measure (or distance) over the records, such that records that are highly similar according to that measure are likely to be duplicates and records that measure as significantly dissimilar are likely to represent different entities. For example, to handle duplicate records created due typographical mistakes, many character-based similarity metrics have been considered. Examples of such metrics include the edit or levenshtein distance \cite{levenshtein1966binary}, smith-waterman distance \cite{waterman1981identification} and jaro distance metric \cite{jaro1980unimatch}. Token-based similarity metrics try to handle rearrangement of words, for example \cite{monge1996field} and \cite{cohen1998integration}. Other techniques include phonetic-based metrics and numerical metrics (to handle numeric data). A nice overview of these methods can be found in \cite{elmagarmid2007duplicate}. 
   
While the above approaches relied on designing a good similarity metric, some works try to `learn' the distance function from a labelled training dataset of pairs of records. Examples of such works include \cite{cochinwala2001efficient} and \cite{bilenko2003adaptive}. Clustering for de-duplication has been mostly addressed in application oriented works. \cite{hernandez1995merge} assumes that the duplicate records are transitive. The clustering problem now reduces to finding the connected components in a graph. 

\subsection{Outline}
Section \ref{section:problemFormulation} introduces the relevant notation and definitions. In Section \ref{section:PCC}, we introduce our framework of Promise Correlation Clustering. In Section \ref{section:PCCNPHard}, we prove that PCC is NP-Hard. In Section \ref{section:PCCNPHardOracle} we prove that PCC is NP-Hard even under the presence of an oracle. In Section \ref{section:RCC}, we introduce our framework of Restricted Correlation Clustering (RCC). In Sections \ref{section:samplingNegative} and \ref{section:samplingPositive} we describe procedures for sampling different-cluster (negative) and same-cluster (positive) pairs. In Section \ref{section:sampleAndQueryComplexity}, we describe our semi-supervised algorithm for solving the RCC problem. We prove an upper bound on the number of labelled samples required to guarantee the success of our algorithm. We also upper bound the number of queries made to the same-cluster oracle. Section \ref{section:conclusion} concludes our work. All the missing proofs can be found in the supplementary section. 

\section{Preliminaries}
\label{section:problemFormulation}
Given a finite domain $X$. A clustering $C$ of the set $X$ is a partition of the set $X$ into $k$ disjoint subsets, that is, $C = \{C_1, \ldots, C_k\}$. Denote by $m(C) = \max C_i$. Define $X^{[2]} = \{(x, y) : x \neq y\}$. In this paper, we view a clustering as a binary-valued function over the pairs of instances. That is, $C: X^{[2]} \rightarrow \{0, 1\}$ and $C(x, y) = 1$ if and only if $x, y$ are in the same $C$ cluster. 

Given $G = (X, E)$, define $d_E(x, y) = 0$ if there exists an edge between $x, y$ and $d_E(x, y) = 1$ otherwise. 

\begin{definition}[Correlation clustering for deduplication]\cite{bansal2004correlation}
Given $G = (X, E)$, find a clustering $C$ which minimizes 
\begin{align*}
  &L_{d_E}(C) = NL_{d_E}(C) + PL_{d_E}(C), \text{ where}\\
  &NL_{d_E}(C) = |\{(x, y): C(x, y) = 1 \text{ and } d_E(x, y) = 0\}|,\\ 
  &PL_{d_E}(C) = |\{(x, y): C(x, y) = 0 \text{ and } d_E(x, y) = 1\}| \numberthis\label{eqn:correlationLoss}
\end{align*}
$L_{d_E}(C)$ is also referred to as the correlation loss. A weighted version of the loss function places weights of $w_1$ and $w_2$ on the two terms and is defined as 
\begin{align}
  &L_{d_E}^{w_1, w_2}(C) = w_1 NL_{d_E}(C) + w_2 PL_{d_E}(C)\label{eqn:weightedCorrelationLoss}
\end{align}
\end{definition}

\begin{definition}[Informative metric]
\label{defn:informativeMetric}
Given $(X, d)$, a clustering $C^*$ and a parameter $\lambda$. We say that the metric $d$ is $(\alpha, \beta)$-informative w.r.t $C^*$ and $\lambda$ if
\begin{align}
	&\underset{(x, y) \sim U^2}{\mb P}\enspace \big[d(x, y) > \lambda \enspace|\enspace C^*(x, y) = 1\big] \enspace \le \enspace \alpha \label{eqn:alphaInformative}\\
	&\underset{(x, y) \sim U^2}{\mb P}\enspace \big[C^*(x, y) = 1 \enspace|\enspace d(x, y) \le \lambda \big] \enspace \ge \enspace \beta \label{eqn:betaInformative}
\end{align}
Here $U^2$ is the uniform distribution over $X^{[2]}$. 
\end{definition} 
This definition says that most of the same-cluster (or positive) pairs are such that the distance between them is atmost $\lambda$. Also, atleast a $\beta$ fraction of all pairs with distance $\le \lambda$ belong to the same cluster. 

To incorporate supervision into the clustering problem, we allow an algorithm to make \textit{same-cluster} queries to a $C^*$-oracle defined below.
\begin{definition}[ Same-cluster oracle \cite{ashtiani2016clustering}]
Given $X$. A same-cluster $C^*$-oracle receives a pair $x, y \in X$ as input and outputs $1$ if $x, y$ belong to the same-cluster according to $C^*$. Otherwise, it outputs $0$. 
\end{definition}

In the next section, we introduce our framework of \textit{promise correlation clustering} and discuss the computational complexity of the problem both in the absence and presence of an oracle.   

\section{Promise Correlation Clustering}
\label{section:PCC}

\begin{definition}[Promise correlation clustering (PCC)]
\label{defn:promiseCorrClustering}
Given a clustering instance $G = (X, E)$. Let $C^*$ be such that
\begin{align}
  &C^* = \argmin_{C \in \mc F} \enspace L_{d_E}(C) \label{eqn:promiseCorrLoss}
\end{align}
where $\mc F$ is the set of all possible clusterings $C$ such that $m(C) \le M$. Given that $d_E$ is $(\alpha, \beta)$-informative. Find the clustering $C^*$. 
\end{definition}
When the edges $E$ correspond to a clustering $C$ then $\beta = 1$ and $\mu = 0$. We show in the subsequent sections that even for this `restricted' class of clusterings (when the size of the maximum cluster is atmost a constant $M$) and given the prior knowledge, PCC is still NP-Hard. Furthermore, PCC is NP-Hard even when we are allowed to make $o(|X|)$ queries to a $C^*$-oracle. 

\subsection{PCC is NP-Hard}
\label{section:PCCNPHard}
\begin{figure*}[!ht]
	\centering
	\begin{tikzpicture}
	  \node[circle,draw,minimum size=1mm,,fill=black,label=below:$x_{i1}$] at (0, 0){};
	  \node[circle,draw,fill=black] at (0, 2){};
	  \node[circle,draw,fill=black] at (-1, 1){};
	  \node[circle,draw,fill=black] at (1, 1){};
	  \node[circle,draw,fill=black] at (0, 3){};

	  \node[circle,draw,fill=black] at (0, 5){};
	  \node[circle,draw,fill=black] at (-1, 4){};
	  \node[circle,draw,fill=black] at (1, 4){};
	  \node[circle,draw,fill=black] at (0, 6){};
			
	  \node[circle,draw,fill=black] at (0, 9){};
	  \node[circle,draw,fill=black] at (0, 11){};
	  \node[circle,draw,fill=black] at (-1, 10){};
	  \node[circle,draw,fill=black] at (1, 10){};
	  \node[circle,draw,fill=black] at (0, 12.5){};
	
	  \draw[blue] (0, 0) -- (0, 2);
	  \draw[blue] (0, 0) -- (-1, 1);
	  \draw[blue] (0, 0) -- (1, 1);
	  \draw (1, 1) -- (-1, 1);
	  \draw (1, 1) -- (0, 2);
	  \draw (-1, 1) -- (0, 2);
	  \draw[red] (1, 1) -- (0, 3);
	  \draw[red] (-1, 1) -- (0, 3);
	  \draw[red] (0, 3) -- (0, 2);
		
	  \draw[blue] (0, 3) -- (0, 5);
	  \draw[blue] (0, 3) -- (-1, 4);
	  \draw[blue] (0, 3) -- (1, 4);
	  \draw (1, 4) -- (-1, 4);
	  \draw (1, 4) -- (0, 5);
	  \draw (-1, 4) -- (0, 5);
	  \draw[red] (1, 4) -- (0, 6);
	  \draw[red] (-1, 4) -- (0, 6);
	  \draw[red] (0, 6) -- (0, 5);
	
	  \draw[blue] (0, 9) -- (0, 11);
	  \draw[blue] (0, 9) -- (-1, 10);
	  \draw[blue] (0, 9) -- (1, 10);
	  \draw (1, 10) -- (-1, 10);
	  \draw (1, 10) -- (0, 11);
	  \draw (-1, 10) -- (0, 11);
	  \draw[red] (1, 10) -- (0, 12.5);
	  \draw[red] (-1, 10) -- (0, 12.5);
	  \draw[red] (0, 12.5) -- (0, 11);
	  	
	  \node[circle,draw,minimum size=1mm,,fill=black,label=below:$x_{i2}$] at (3, 0){};
	  \node[circle,draw,fill=black] at (3, 2){};
	  \node[circle,draw,fill=black] at (2.5, 1){};
	  \node[circle,draw,fill=black] at (3.5, 1){};
	  \node[circle,draw,fill=black] at (3, 3){};
	
	  \node[circle,draw,fill=black] at (3, 5){};
	  \node[circle,draw,fill=black] at (2.5, 4){};
	  \node[circle,draw,fill=black] at (3.5, 4){};
	  \node[circle,draw,fill=black] at (3, 6){};
		
	  \node[circle,draw,fill=black] at (3, 9){};
	  \node[circle,draw,fill=black] at (3, 11){};
	  \node[circle,draw,fill=black] at (2.5, 10){};
	  \node[circle,draw,fill=black] at (3.5, 10){};
	  \node[circle,draw,fill=black] at (3, 12){};
	  		
	  \draw[blue] (3, 0) -- (3, 2);
	  \draw[blue] (3, 0) -- (2.5, 1);
	  \draw[blue] (3, 0) -- (3.5, 1);
	  \draw (3.5, 1) -- (2.5, 1);
	  \draw (3.5, 1) -- (3, 2);
	  \draw (2.5, 1) -- (3, 2);
	  \draw[red] (2.5, 1) -- (3, 3);
	  \draw[red] (3.5, 1) -- (3, 3);
	  \draw[red] (3, 3) -- (3, 2);
	  
	  \draw[blue] (3, 3) -- (3, 5);
	  \draw[blue] (3, 3) -- (2.5, 4);
	  \draw[blue] (3, 3) -- (3.5, 4);
	  \draw (3.5, 4) -- (2.5, 4);
	  \draw (3.5, 4) -- (3, 5);
	  \draw (2.5, 4) -- (3, 5);
	  \draw[red] (3.5, 4) -- (3, 6);
	  \draw[red] (2.5, 4) -- (3, 6);
	  \draw[red] (3, 6) -- (3, 5);
	  
	  \draw[blue] (3, 9) -- (3, 11);
	  \draw[blue] (3, 9) -- (2.5, 10);
	  \draw[blue] (3, 9) -- (3.5, 10);
	  \draw (3.5, 10) -- (2.5, 10);
	  \draw (3.5, 10) -- (3, 11);
	  \draw (2.5, 10) -- (3, 11);
	  \draw[red] (3.5, 10) -- (3, 12);
	  \draw[red] (2.5, 10) -- (3, 12);
	  \draw[red] (3, 12) -- (3, 11);
	  
	  \node[circle,draw,minimum size=1mm,,fill=black,label=below:$x_{i3}$] at (5, 0){};
	  \node[circle,draw,fill=black] at (5, 2){};
	  \node[circle,draw,fill=black] at (4.5, 1){};
	  \node[circle,draw,fill=black] at (5.5, 1){};
	  \node[circle,draw,fill=black] at (5, 3){};
	
	  \node[circle,draw,fill=black] at (5, 5){};
	  \node[circle,draw,fill=black] at (4.5, 4){};
	  \node[circle,draw,fill=black] at (5.5, 4){};
	  \node[circle,draw,fill=black] at (5, 6){};
		
	  \node[circle,draw,fill=black] at (5, 9){};
	  \node[circle,draw,fill=black] at (5, 11){};
	  \node[circle,draw,fill=black] at (4.5, 10){};
	  \node[circle,draw,fill=black] at (5.5, 10){};
	  \node[circle,draw,fill=black] at (5, 12){};
	  		
	  \draw[blue] (5, 0) -- (5, 2);
	  \draw[blue] (5, 0) -- (4.5, 1);
	  \draw[blue] (5, 0) -- (5.5, 1);
	  \draw (5.5, 1) -- (4.5, 1);
	  \draw (5.5, 1) -- (5, 2);
	  \draw (4.5, 1) -- (5, 2);
	  \draw[red] (5.5, 1) -- (5, 3);
	  \draw[red] (4.5, 1) -- (5, 3);
	  \draw[red] (5, 3) -- (5, 2);
	  
	  \draw[blue] (5, 3) -- (5, 5);
	  \draw[blue] (5, 3) -- (4.5, 4);
	  \draw[blue] (5, 3) -- (5.5, 4);
	  \draw (5.5, 4) -- (4.5, 4);
	  \draw (5.5, 4) -- (5, 5);
	  \draw (4.5, 4) -- (5, 5);
	  \draw[red] (5.5, 4) -- (5, 6);
	  \draw[red] (4.5, 4) -- (5, 6);
	  \draw[red] (5, 6) -- (5, 5);
	  
	  \draw[blue] (5, 9) -- (5, 11);
	  \draw[blue] (5, 9) -- (4.5, 10);
	  \draw[blue] (5, 9) -- (5.5, 10);
	  \draw (5.5, 10) -- (4.5, 10);
	  \draw (5.5, 10) -- (5, 11);
	  \draw (4.5, 10) -- (5, 11);
	  \draw[red] (5.5, 10) -- (5, 12);
	  \draw[red] (4.5, 10) -- (5, 12);
	  \draw[red] (5, 12) -- (5, 11);
	  
	  \node[circle,draw,minimum size=1mm,label=below:$r_{1}$,fill=black] at (8, 0){};
	  \node[circle,draw,fill=black] at (8, 2){};
	  \node[circle,draw,fill=black] at (7, 1){};
	  \node[circle,draw,fill=black] at (9, 1){};
	  \node[circle,draw,fill=black] at (8, 3){};

	  \node[circle,draw,fill=black] at (8, 5){};
	  \node[circle,draw,fill=black] at (7, 4){};
	  \node[circle,draw,fill=black] at (9, 4){};
	  \node[circle,draw,fill=black] at (8, 6){};
			
	  \node[circle,draw,fill=black] at (8, 9){};
	  \node[circle,draw,fill=black] at (8, 11){};
	  \node[circle,draw,fill=black] at (7, 10){};
	  \node[circle,draw,fill=black] at (9, 10){};
	  \node[circle,draw,fill=black] at (8, 12.5){};
	
	  \draw[blue] (8, 0) -- (8, 2);
	  \draw[blue] (8, 0) -- (7, 1);
	  \draw[blue] (8, 0) -- (9, 1);
	  \draw (9, 1) -- (7, 1);
	  \draw (9, 1) -- (8, 2);
	  \draw (7, 1) -- (8, 2);
	  \draw[red] (9, 1) -- (8, 3);
	  \draw[red] (7, 1) -- (8, 3);
	  \draw[red] (8, 3) -- (8, 2);
		
	  \draw[blue] (8, 3) -- (8, 5);
	  \draw[blue] (8, 3) -- (7, 4);
	  \draw[blue] (8, 3) -- (9, 4);
	  \draw (9, 4) -- (7, 4);
	  \draw (9, 4) -- (8, 5);
	  \draw (7, 4) -- (8, 5);
	  \draw[red] (9, 4) -- (8, 6);
	  \draw[red] (7, 4) -- (8, 6);
	  \draw[red] (8, 6) -- (8, 5);
	
	  \draw[blue] (8, 9) -- (8, 11);
	  \draw[blue] (8, 9) -- (7, 10);
	  \draw[blue] (8, 9) -- (9, 10);
	  \draw (9, 10) -- (7, 10);
	  \draw (9, 10) -- (8, 11);
	  \draw (7, 10) -- (8, 11);
	  \draw[red] (9, 10) -- (8, 12.5);
	  \draw[red] (7, 10) -- (8, 12.5);
	  \draw[red] (8, 12.5) -- (8, 11);
	
	  \draw[blue] (0, 12.5) -- (8, 12.5);
	  \draw[blue] (0, 12.5) -- (5, 12);
	  \draw[blue] (0, 12.5) -- (3, 12);
	  \draw[blue] (3, 12) -- (5, 12);
	  \draw[blue] (3, 12) -- (8, 12.5);
	  \draw[blue] (5, 12) -- (8, 12.5);
	  
	  \path (0, 9) -- (0, 6) node [black, font=\Huge, midway, sloped] {$\dots$};
	  \path (3, 9) -- (3, 6) node [black, font=\Huge, midway, sloped] {$\dots$};
	  \path (5, 9) -- (5, 6) node [black, font=\Huge, midway, sloped] {$\dots$};
	  \path (8, 9) -- (8, 6) node [black, font=\Huge, midway, sloped] {$\dots$};
	  
	  \draw [decorate,decoration={brace,amplitude=10pt},xshift=-4pt,yshift=0pt]
(-1,0) -- (-1,3) node [black,midway,xshift=-0.6cm] 
{\footnotesize $B_1$};
	  \draw [decorate,decoration={brace,amplitude=10pt},xshift=-4pt,yshift=0pt]
(-1,3) -- (-1,6) node [black,midway,xshift=-0.6cm] 
{\footnotesize $B_2$};
	  \draw [decorate,decoration={brace,amplitude=10pt},xshift=-4pt,yshift=0pt]
(-1,9) -- (-1,12.5) node [black,midway,xshift=-0.6cm] 
{\footnotesize $B_t$};

	\end{tikzpicture}
\caption{Part of graph $G$ constructed for the subset $S_i =  \{x_{i1}, x_{i2}, x_{i3}\}$. The graph is constructed by local replacement when for $p = 4$. If $S_i$ is included in the exact cover then the edges colored black and the edges colored blue represent the corresponding clustering of this part of the graph $G$. If $S_i$ is not included in the exact cover then the edges colored red and the edges colored black represent the clustering of this part of the graph.}
\label{fig:X3CNPHard}
\end{figure*}
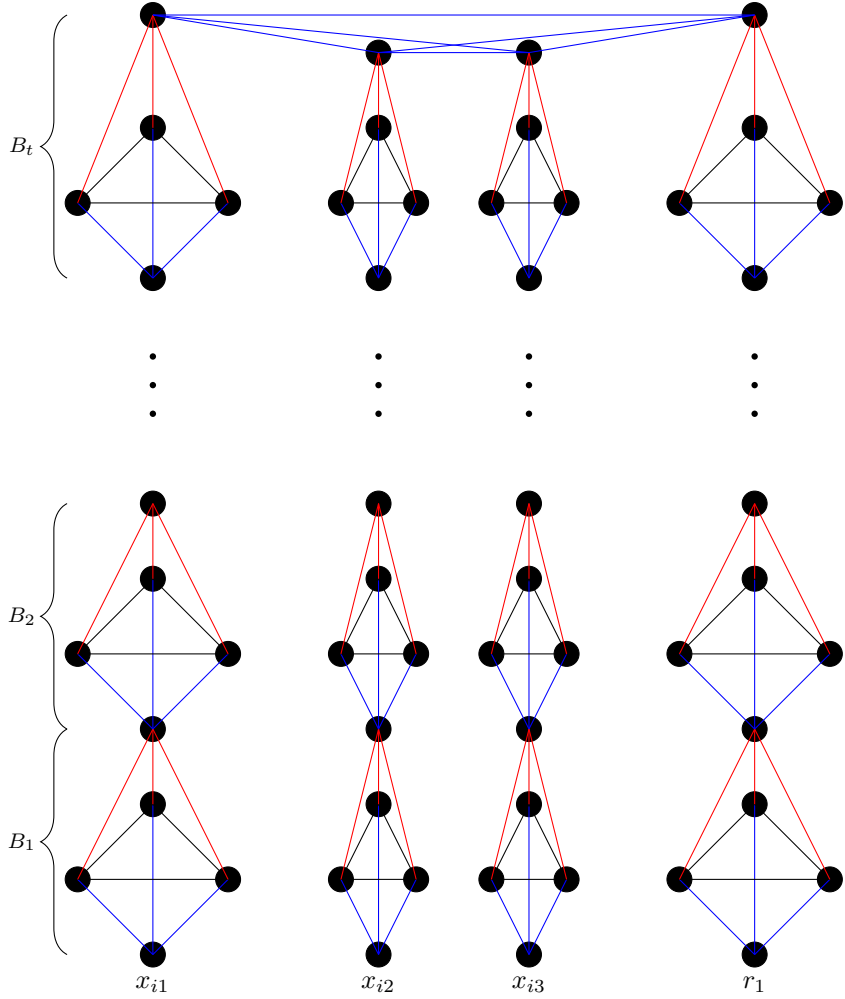

\begin{theorem}
Finding the optimal solution to the Promise Correlation Clustering problem is NP-Hard for all $M \ge 3$ and for $\alpha = 0$ and $\beta = \frac{1}{2}$.  
\end{theorem}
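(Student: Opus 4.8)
The plan is to prove NP-hardness by a polynomial-time reduction from Exact Cover by $3$-Sets (X3C): given a universe $U$ with $|U| = 3q$ and a family $\mathcal{S} = \{S_1, \dots, S_m\}$ of $3$-element subsets of $U$, decide whether some subfamily covers every element of $U$ exactly once. The reduction is by \emph{local replacement}, as depicted in Figure~\ref{fig:X3CNPHard}: each set $S_i$ is replaced by a gadget assembled from a chain of identical blocks $B_1,\dots,B_t$, with one column of blocks per element $x_{i1},x_{i2},x_{i3}$ and a set-representative column $r_i$, and the tops of the columns are joined. Every drawn edge (blue, black, or red) is placed in $E$, and the block is engineered so that the gadget admits exactly two natural clique-clusterings: an ``included'' state (black$+$blue pairs internal) and a ``not-included'' state (black$+$red pairs internal), between which a consistent global optimum must choose. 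The parameter $p$ (shown as $p=4$) is the knob controlling the block structure and chain length $t$.

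First I would fix the intended target $C^*$ to be the clustering in which every cluster is a clique of $G$, so that no within-cluster pair is a non-edge. Taking the threshold $\lambda = 0$ (so that $d_E(x,y)\le\lambda$ means exactly that $xy\in E$), this makes every same-cluster pair similar and hence forces $\alpha = 0$ in Definition~\ref{defn:informativeMetric}; it also keeps each cluster a small clique, which I would size so that $m(C^*) \le 3 \le M$, giving the bound for every $M \ge 3$. Because $\alpha=0$, the within-cluster dissimilar term $NL_{d_E}(C^*)$ vanishes and $L_{d_E}(C^*)$ counts exactly the similar pairs (edges of $E$) that are cut across clusters, so the objective becomes a pure cut-counting problem. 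I would then count, block by block, the internal and cut edges in each of the two states, and tune the block design together with the chain length $t$ (through $p$) so that (i) the two states have identical local loss, making the choice cost-neutral except through the global X3C constraints, and (ii) the fraction of $E$ lying inside clusters is exactly one half, yielding $\beta = \tfrac12$ in~\eqref{eqn:betaInformative}.

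Next I would establish the equivalence. In the forward direction, an exact cover induces a clustering putting the gadget of each chosen $S_i$ in the ``included'' state and every other gadget in the ``not-included'' state; the top-vertex connections are arranged so that this choice is globally consistent precisely when each element is covered exactly once, and it attains an explicit target loss $T$. In the reverse direction I would argue \emph{gadget rigidity}: any clustering of loss at most $T$ can be transformed, without increasing its cost, into one in which every cluster is a clique and every chain is internally consistent (no two adjacent blocks in different states), since a non-clique cluster or a local inconsistency can always be repaired at no extra cost by an exchange argument. Once all gadgets sit in one of the two canonical states, the ``included'' sets must form an exact cover, for an uncovered or doubly-covered element forces an extra cut edge at the joined tops and pushes the loss strictly above $T$; hence solving PCC decides X3C.

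The step I expect to be the main obstacle is making both promise parameters hold \emph{simultaneously} with a working reduction. Capping $m(C^*) \le 3$ fixes the clique size and the per-block bookkeeping, while forcing $\beta = \tfrac12$ is a global ratio condition on $|E|$; reconciling the two requires tuning $t$ through $p$ and, if needed, padding the instance with inert clique components that shift the within-cluster edge fraction to exactly one half without perturbing the optimal clustering. The accompanying rigidity argument---showing the optimum can never gain by using non-clique clusters or by mixing states within a chain---is the delicate combinatorial core, since it is what guarantees that minimizing the correlation loss genuinely solves X3C rather than merely bounding it.
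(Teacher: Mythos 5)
You have the right starting point---a local-replacement reduction from X3C using the gadget of Figure~\ref{fig:X3CNPHard}---but there is a genuine gap in how you handle the quantifier ``for all $M \ge 3$''. You propose to cap the target clusters at size $3$ and conclude the bound ``for every $M \ge 3$''. That does not follow: the PCC instance with parameter $M$ optimizes over \emph{all} clusterings with clusters of size up to $M$, so for $M > 3$ your optimality argument must also beat clusterings that merge a triangle with adjacent vertices into a cluster of size up to $M$, trading a few negative errors for many saved positive errors; hardness for $M=3$ is a statement about a different (smaller) hypothesis class and does not transfer. The paper resolves this by making the gadget out of cliques of size $p$ with $p$ tied to $M$: then the exact-cover clustering has zero negative error and gives every vertex positive error exactly $\deg(v)-(p-1)$, which is the minimum achievable for \emph{any} vertex under the constraint $m(C)\le p$. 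Hence the exact-cover clustering strictly dominates every other clustering in the allowed class, with no exchange argument needed. Your ``gadget rigidity'' step---repairing an arbitrary near-optimal clustering into canonical form at no extra cost---is precisely the delicate part you leave unproved, and it is avoidable: the decision procedure only needs (i) when an exact cover exists the canonical clustering is the strict optimum, and (ii) any output clustering with no negative errors and all clusters of size exactly $p$ reads off an exact cover.

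Two smaller points. First, $\beta$ in Definition~\ref{defn:informativeMetric} is a lower bound on $\Pr[C^*(x,y)=1 \mid d(x,y)\le\lambda]$, so you only need the within-cluster fraction of edges to be at least $\tfrac12$, not exactly $\tfrac12$; the padding with inert cliques is unnecessary, and the paper gets $\beta = \frac{1}{1+\frac{2}{p}+\frac{1}{pt}} > \tfrac12$ directly from the edge counts for $t\ge 2$. Second, your plan to tune the blocks so the ``included'' and ``not-included'' states have identical local loss is also unnecessary once the per-vertex lower bound is in place, since both states are zero-negative-error clusterings into $p$-cliques and therefore both already attain the pointwise minimum.
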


\noindent To prove the result, we will use a reduction from exact cover by $3$-sets problem which is known to be NP-Hard.

(X3C) Given a universe of elements $U = \{x_1, \ldots, x_{3q}\}$ and a collections of subsets $S = \{S_1, \ldots, S_m\}$. Each $S_i \subset U$ and contains exactly three elements. Does there exist $S' \subseteq S$ such that each element of $U$ occurs exactly once in $S'$?

This decision problem is known to be NP-Hard. We will now reduce an instance of X3C to the promise correlation clustering problem. For each three set $S_i = \{x_{i1}, x_{i2}, x_{i3}\}$, we construct a replacement gadget as described in Fig. \ref{fig:X3CNPHard}. The gadget is similar to the one used in the proof of partition into triangles problem. However, instead of triangles the graph is `made of' cliques of size $p$.

Given an instance of X3C, we construct $G = (V, E)$ using local replacement described in Fig. \ref{fig:X3CNPHard}. Let $A$ be an algorithm which solves the promise problem described in Eqn. \ref{eqn:promiseCorrLoss}. Then, we can use this algorithm to decide exact cover by three sets as follows.

If $A$ outputs a clustering $C$ such that all the clusters have size exactly $p$ and $E_C$ makes no negative errors w.r.t $E$ (that is $\mu(E_C) = 0$) then output YES. Otherwise, output NO. Next, we will prove that this procedure decides X3C. 

Let there exists an exact cover for the X3C instance. Let $C$ be the clustering corresponding to the exact cover. That is, the edges colored blue and black correspond to this clustering and the corresponding vertices are in the same cluster (Fig. \ref{fig:X3CNPHard}). Note that this clustering makes no negative errors. Furthermore, each point is in a cluster of size exactly $p$. Thus, the positive error corresponding to any vertex is the degree of that vertex minus $p-1$. Since, the size of a cluster is atmost $p$, this is the minimum possible positive error for any vertex. Hence, any other clustering strictly makes more positive errors than $C$. 

It is easy to see from the construction that if $A$ finds a clustering which has no negative errors and all the clusters have size $p$, then this corresponds to exact cover of the X3C instance and hence we output YES. If this does not happen then there does not exist any exact cover for $(U, S)$. This is because if there was an exact cover then the corresponding clustering would satisfy our condition. Thus, $A$ decides X3C. Since, X3C is NP-Hard, no polynomial time algorithm $A$ exists unless $P = NP$.

In the construction, for each clause, we have $p^2 t + (p - 3)$ vertices and a vertex for each of the variables. Therefore, $|V| = m (p^2 t + (p-3)) + 3q$ and $|E| = pt({p \choose 2}+p-1) + {p \choose 2}$.  Consider a clustering $C$ which places all the $x_i$'s  and $r_i$'s in singleton clusters and places rest of the points in clusters of size $p$. For $t \ge 2$,
\begin{align*}
  \beta &= \frac{pt{p \choose 2}}{pt({p \choose 2}+p-1) + {p \choose 2}} = \frac{1}{1 + \frac{2}{p} + \frac{1}{pt}} > \frac{1}{2} \text{ and }\alpha = 0
\end{align*} 

\subsection{Hardness of PCC in the presence of an oracle}
\label{section:PCCNPHardOracle}
In the previous sections, we have shown that the PCC problem is NP-Hard without queries. It is trivial to see that by making $\beta |X|$ queries to the same-cluster oracle allows us to solve (in polynomial time) the Promise Correlation Clustering problem for all $M$ and $\alpha = 0$. In this section, we prove that the linear dependence on $n = |X|$ is tight. We prove that if the exponential time hypothesis (ETH) holds then any algorithm that runs in polynomial time makes atleast $\Omega(n)$ same-cluster queries.

\begin{theorem}
Given that the Exponential Time Hypothesis (ETH) holds then any algorithm for the Promise Correlation Clustering problem  that runs in polynomial time makes $\Omega(|X|)$ same-cluster queries for all $M \ge 3$ and for $\alpha = 0$ and $\beta = \frac{1}{2}$. 
\end{theorem}

\noindent Below, we give a proof sketch but a detailed proof is in the supplementary material. The exponential time hypothesis says that any solver for $3$-SAT runs in $2^{o(m)}$ time (where $m$ is the number of clauses in the $3$-SAT formula). We use a reduction from $3$-SAT to 3DM to X3C to show that the exact cover by 3-sets (X3C) problem also can't be solved in $2^{o(m)}$ time (if ETH holds). Then, using the reduction from the previous section implies that PCC also can't be solved in $2^{o(n)}$ time. Thus, any query based algorithm for PCC needs to make atleast $\Omega(n)$ queries where $n = |X|$ is the number of vertices in the graph. 

\begin{definition}[3-SAT].\\
Input: A boolean formulae $\phi$ in 3CNF with $n$ literals and $m$ clauses. Each clause has exactly three literals. \\
Output: YES if $\phi$ is satisfiable, NO otherwise. 
\end{definition}

\noindent\textbf{Exponential Time Hypothesis}\\
There does not exist an algorithm which decides 3-SAT  and runs in $2^{o(m)}$ time.

\noindent To prove that (X3C) is NP-Hard, the standard We will reduce 3-SAT to 3-dimensional matching problem. 3DM is already known to be NP-Hard. However, the standard reduction of 3-SAT to 3DM constructs a set with number of matchings in $\Theta(m^2 n^2)$. Hence, using the standard reduction, the exponential time hypothesis would imply there does not exist an algorithm for 3DM which runs in $\Omega(m^\frac{1}{4})$. Our reduction is based on the standard reduction. However, we make some clever optimizations especially in the way we encode the clauses. This improves the lower bound to $\Omega(m)$.

Using the above result, we immediately get an $\Omega(2^m)$ lower bound on the run-time of X3C. Now, using the same reduction of X3C to PCC as in Section \ref{subsection:PCCNPHard}, gives the same lower bound of $\Omega(n)$ on the running time of PCC. 

For the sake of contradiction, let us assume that there exists an algorithm which solves PCC in polynomial time by making $o(n)$ same-cluster queries ($n$ is the number of vertices). Then by simulating all possible answers for the oracle, we get a non-query algorithm which solves PCC in $2^{o(n)}$. Hence, no such query algorithm exists. 

\section{Restricted Correlation Clustering}
\label{section:RCC}
The results in the previous section show that even under strong promise, correlation clustering is still NP-Hard. Furthermore, it is hard even when given access to an oracle. This motivates us to consider a restricted version of the problem. 

Note that in Defn. \ref{defn:promiseCorrClustering}, the optimization problem was over the set of all possible clusterings $\mc F$ (with a restriction on the maximum cluster size). In this section, we restrict $\mc F$ to be a finite class of clusterings. That is, $\mc F' = \{T_1, \ldots, T_r, C_1, \ldots, C_s\}$ with the understanding that each $T_i$ (a hierarchical clustering tree of $X$) is a collection of clusterings represented by the prunings of the tree. Now, we consider two versions of correlation clustering on this restricted family. The first is to find a clustering $C \in \mc F'$ which correlates `as much as possible' with the given graph $G = (X, E)$. More formally, given $G = (X, E)$ find $\hat C \in \mc F'$ such that 
\begin{align}
\hat C = \argmin_{C \in \mc F'} \enspace L_E(C) \label{eqn:RCCTrivial}
\end{align}
Eqn. \ref{eqn:RCCTrivial} can be solved by going over the list of clusterings and trees (in a bottom-up fashion) and in polynomial time finding $\hat C$ which is `closest' to $E$. In the second version, the goal is to find a clustering $\hat C \in \mc F'$ which correlates as much as possible to an unknown target clustering $C^*$ which may or may not be in the set $\mc F'$. However, the algorithm has access to a $C^*$-oracle. For the rest of this paper, we will focus on the second version which we call \textit{restricted correlation clustering}. 

\begin{definition}[Restricted correlation clustering (RCC)]
\label{defn:rcc}
Given a clustering instance $(X, d)$. Let $C^*$ be an unknown target clustering of $X$ and weights $w_1, w_2$. Let $d_{C^*}: X^{[2]} \rightarrow \{0, 1\}$ be defined as $d_{C^*}(x, y) = 0$ if $x, y$ are in the same $C^*$ cluster and $1$ otherwise. Find $\hat C \in \mc F'$ such that 
\begin{align}
\hat C = \argmin_{C \in \mc F'} \enspace L^{w_1, w_2}_{d_{C^*}}(C)\label{eqn:RCCMain}
\end{align}
where $\mc F' = \{T_1, \ldots, T_r, C_1, \ldots, C_s\}$. $T_i$ is a hierarchical clustering tree and $C_i$ is a clustering of $X$.
\end{definition}

To solve the RCC problem, we adopt the following strategy. We use a procedure (call it $\mc P_0$) to sample negative (or different cluster) pairs and another procedure (call it $\mc P_1$) to sample positive (or same-cluster) pairs. Both the sampling procedures use the help of the $C^*$-oracle. We then evaluate each of the clusterings in $\mc F'$ on our sample $S$ and choose the clustering which has minimum loss. We prove that the loss of the clustering $\hat C$ obtained using this procedure is close to the loss of $\hat C^*$ (the clustering with minimum loss in $\mc F'$) .

We first discuss how to sample the positive and negative pairs. Then, we discuss the sample complexity of our approach. That is, the number of positive and negative pairs (or $|S|$) needed to guarantee that the loss of $\hat C$ is close to that of $\hat C^*$. Before we proceed, lets introduce the following definitions which will be useful in the subsequent sections. 
\begin{definition}[Restricted distributions]
\label{defn:restrictedDistribution}
Given $X$ and a target clustering $C^*$. Define $X^{[2]+} = \{(x, y) \in X^{[2]} : C^*(x, y) = 1\}$ and $X^{[2]-} = \{(x, y) \in X^{[2]} : C^*(x, y) = 0\}$. We define $P^+$ as the uniform distribution over $X^{[2]+}$ and $P^-$ as the uniform distribution over $X^{[2]-}$. 
\end{definition}

The sampling procedure $\mc P_0$ will try to approximate $P^-$ while $\mc P_1$ will approximate $P^+$. 

\begin{definition}[$\gamma$-skewed] 
\label{defn:gammaskewed} 
Given $X$ and a $C^*$-oracle. We say that $X$ is $\gamma$-skewed w.r.t $C^*$ if 
\begin{align*}
	&\underset{(x, y) \sim U^2}{\mb P}\enspace \big[\enspace C^*(x, y) = 1 \big] \enspace \le \enspace \gamma
\end{align*}
\end{definition}
The above definition formalizes the statement that most of the pairs of points belong to different clusters. 
 
\subsection{Sampling negative pairs}
\label{section:samplingNegative}
Assume our input $X$ is $\gamma$-skewed. Thus, if we choose a pair uniformly at random, then it is `highly likely' to be a negative pair. Alg. \ref{alg:weightedNegPairs} describes our sampling procedure.

\RestyleAlgo{ruled}
\SetAlgoNoLine
\LinesNumbered
\SetNlSkip{-0.4em}
\begin{algorithm}
\label{alg:weightedNegPairs}
\caption{Procedure $\mc P_0$ for negative pairs}

\Indp\KwIn{A set $X$ and a $C^*$-oracle.}
\KwOut{One pair $(x, y) \in X^{[2]}$ such that $C^*(x, y) = 0$}

\vspace{0.1in} 
\While{TRUE}{
  Sample $(x, y)$ using $U^2$\\
  \If {$ C^*(x, y) = 0$}{\label{algLine:oracleP1}
  		Output $(x, y)$
	}
}
\end{algorithm}

\begin{restatable}{lem}{weightedNegUniform}
\label{lemma:weightedNegUniform}
Given $X$ and a $C^*$-oracle. The procedure $\mc P_0$ samples a pair $(x, y)$ according to the distribution $P^-$.
\end{restatable}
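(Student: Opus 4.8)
The plan is to recognize $\mc P_0$ as a textbook rejection sampler and argue that its output distribution is exactly $U^2$ conditioned on the acceptance event $C^*(x,y)=0$, which by Definition \ref{defn:restrictedDistribution} is precisely $P^-$. So the whole statement reduces to verifying that the ``conditioning by resampling'' implemented by the \textbf{while} loop produces the correct conditional law.

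First I would fix an arbitrary pair $(a,b) \in X^{[2]}$ and compute the probability that $\mc P_0$ returns it. Since each iteration draws $(x,y)$ independently from $U^2$ and the loop halts exactly when it first draws a pair with $C^*(x,y)=0$, I would condition on the index of the first accepted iteration. Writing $p = \mb P_{(x,y)\sim U^2}[C^*(x,y)=0] = |X^{[2]-}|/|X^{[2]}|$ for the per-iteration acceptance probability, returning $(a,b)$ on iteration $k+1$ requires $k$ consecutive rejections followed by drawing exactly $(a,b)$ and accepting it.

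If $(a,b)\in X^{[2]-}$, summing the resulting geometric series gives
\begin{align*}
\mb P[\mc P_0 \text{ outputs } (a,b)] = \sum_{k=0}^{\infty} (1-p)^k \cdot \frac{1}{|X^{[2]}|} = \frac{1}{|X^{[2]}|}\cdot\frac{1}{p} = \frac{1}{|X^{[2]-}|},
\end{align*}
which is exactly the mass $P^-$ assigns to $(a,b)$; if instead $(a,b)\in X^{[2]+}$, the pair is rejected whenever drawn, so its output probability is $0$, again matching $P^-$. As the two distributions agree on every pair of $X^{[2]}$, they coincide.

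The only genuine subtlety — the step I would be most careful about — is termination: both the geometric-series manipulation and the claim that $\mc P_0$ halts almost surely rely on $p>0$, i.e.\ on $X^{[2]-}$ being nonempty. I would note that this holds whenever $C^*$ has at least two clusters (so that the $\gamma$-skew setting is nonvacuous), giving $\lim_{k\to\infty}(1-p)^k = 0$, so an infinite run of rejections has probability zero and the output is well-defined with probability one. Everything else is a routine rejection-sampling computation, and I do not anticipate any deeper obstacle.
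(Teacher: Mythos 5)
Your proposal is correct and follows essentially the same argument as the paper: fix a negative pair, sum the geometric series over the number of rejected iterations, and conclude the output probability is $U^2(x,y)/U^2(X^{[2]-}) = P^-(x,y)$. Your added remarks on positive pairs receiving zero mass and on almost-sure termination when $X^{[2]-}\neq\emptyset$ are sensible but do not change the substance of the argument.
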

\begin{proof}
The probability that a negative pair is sampled during a trial is $U^2(X^{[2]-}) =: q$. Fix a negative pair $(x, y)$ and let $U^2(x, y) = p$. Hence, the probability that the pair $(x, y)$ is sampled $= p + (1-q)p + (1-q)^2p + \ldots = p\sum_{i=0}^\infty (1-q)^i = \frac{p}{q} = \frac{U^2(x, y)}{U^2(X^{[2]-})} = P^-(x, y)$.
\end{proof}

\noindent Note that to sample one negative pair, procedure $\mc P_0$ might need to ask more than one same-cluster query. However, since our input $X$ is $\gamma$-skewed, we `expect' the number of `extra' queries to be `small'. 

\begin{restatable}{lem}{negQueries}
\label{lemma:negQueries}Given set $X$ and a $C^*$-oracle. Let $X$ be $\gamma$-skewed and Let $q$ be the number of same-cluster queries made by $\mc P_0$ to the $C^*$-oracle. Then, $\mb E[q] \le \frac{1}{1-\gamma}$.
\end{restatable}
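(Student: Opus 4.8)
The plan is to observe that the number of queries made by $\mc P_0$ is governed by a geometric distribution and then to bound its parameter using the skew assumption. Each pass through the \texttt{while} loop of $\mc P_0$ draws a single pair $(x,y) \sim U^2$ and issues exactly one same-cluster query; the loop halts precisely on the first pass that draws a negative pair. Hence if I let $\rho := U^2(X^{[2]-})$ denote the probability that a uniformly random pair is negative, the number of queries $q$ is exactly the number of trials up to and including the first ``success'' in a sequence of independent $\mathrm{Bernoulli}(\rho)$ trials.

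First I would record the distribution of $q$: for every integer $i \ge 1$, $\mb P[q = i] = (1-\rho)^{i-1}\rho$, since the first $i-1$ draws must all be positive pairs (each with probability $1-\rho$) and the $i$-th draw must be negative. Summing the geometric series then gives the standard identity $\mb E[q] = \sum_{i \ge 1} i(1-\rho)^{i-1}\rho = 1/\rho$.

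Next I would translate the $\gamma$-skew hypothesis into a lower bound on $\rho$. By Definition \ref{defn:gammaskewed}, $\mb P_{(x,y)\sim U^2}[C^*(x,y) = 1] \le \gamma$, and since $X^{[2]-}$ is exactly the complementary event $\{C^*(x,y)=0\}$ we obtain $\rho = 1 - \mb P_{(x,y)\sim U^2}[C^*(x,y)=1] \ge 1-\gamma$. Combining this with $\mb E[q] = 1/\rho$ and the fact that $t \mapsto 1/t$ is decreasing on positive reals yields $\mb E[q] = 1/\rho \le 1/(1-\gamma)$, as required.

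I do not anticipate a genuine obstacle here, since the argument is a direct geometric-expectation computation. The only points that require care are purely bookkeeping: confirming that each loop iteration corresponds to exactly one oracle query (so that ``number of queries'' and ``number of trials'' coincide) and that the draws are mutually independent (which holds because each iteration resamples from $U^2$ afresh, independently of the past). With those two observations in hand, the stated bound follows immediately.
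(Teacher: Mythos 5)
Your proposal is correct and follows essentially the same route as the paper's proof: identify $q$ as a geometric random variable whose success probability is the probability of drawing a negative pair, lower-bound that probability by $1-\gamma$ via the skew assumption, and conclude $\mb E[q] = 1/\rho \le 1/(1-\gamma)$. The only difference is that you spell out the geometric-series computation and the one-query-per-iteration bookkeeping explicitly, which the paper leaves implicit.
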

\begin{proof}
Let $p$ denote the probability that a negative pair is sampled during an iteration. We know that $p \ge (1 - \gamma)$. Let $q$ be a random variable denoting the number of iterations (or trials) before a negative pair is sampled. Then, $q$ is a geometric random variable. $\mb E[q] = \frac{1}{p} \le \frac{1}{1-\gamma}$.
\end{proof}

Lemma \ref{lemma:negQueries} shows that for $\gamma < \frac{1}{2}$, to sample a negative pair, procedure $\mc P_0$ makes atmost two queries to the oracle in expectation. Moreover, the number of queries is tight around the mean. Note that this sampling strategy is not useful for positive pairs. This is because the fraction of positive pairs in the dataset is small. Hence, to sample a single positive pair we would need to make `many' same-cluster queries. 

\subsection{Sampling positive pairs}
\label{section:samplingPositive}

Given a clustering instance $(X, d)$. Assume that the metric $d$ is $(\alpha, \beta)$-informative w.r.t target $C^*$ and parameter $\lambda$. This means that `most' of the positive pairs are within distance $\lambda$. Our sampling strategy is to ``construct" a set $K = \{(x, y) \in X^2: d(x, y) \le \lambda\}$ and then sample uniformly from this set. We will prove that this procedure approximates $P^+$. 

The sampling algorithm is described in Alg. \ref{alg:weightedPosPairs}. In the pre-compute stage, for all points $x$ we construct its set of `neighbours' ($S_x$). We then choose a point with probability proportional to the size of its neighbour-set and then choose the second point uniformly at random from amongst its neighbours. This guarantees that we sample uniformly from the set $K$.  

\RestyleAlgo{ruled}
\SetAlgoNoLine
\LinesNumbered
\begin{algorithm}
\label{alg:weightedPosPairs}
\caption{Sampling procedure $\mc P_{11}$ for positive pairs (general metrics)}
\Indp\KwIn{A set $X$, a $C^*$-oracle and a parameter $\lambda$.}
\KwOut{One pair $(x, y) \in X^{[2]}$ such that $\mc C^*(x, y) = 1$}
\vspace{0.1in}\textbf{Pre-compute:} For all $x \in X$, compute $S_x := \{y: d(x, y) \le \lambda\}$.\\

\vspace{0.1in} \While{TRUE}{
Sample $x \in X$ with probability $\propto |S_x|$. \label{algLine:size}\\
Sample $y$ uniformly at random from $S_x$. \label{algLine:sample}\\
\If {$\mc C^*(x, y) = 1$}{\label{algLine:oracleP21}
	Output $(x, y)$.
	}
}
\end{algorithm}

\begin{restatable}{lem}{weightedPosApproxUniform}
\label{lemma:weightedPosApproxUniform}
Given set $(X, d)$, a $C^*$-oracle and parameter $\lambda$. Let $d$ be $(\alpha, \beta)$-informative w.r.t $\lambda$ and $C^*$.  Then the sampling procedure $\mc P_{11}$ induces a distribution $T$ over $X^{[2]}$ such that for any labelling function $h$ over $X^{[2]}$ we have that $$\Big|\underset{(x, y) \sim P^+}{\mb P}\enspace \big[ h(x, y) = 0 ] - \underset{(x, y) \sim T}{\mb P}\enspace \big[ h(x, y) = 0 ]\Big|  \enspace \le \enspace 2\alpha.$$ 
\end{restatable}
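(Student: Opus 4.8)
The plan is to pin down the output distribution $T$ explicitly and then compare it to $P^+$ using only the $\alpha$-side of the informativeness hypothesis (Eqn.~\ref{eqn:alphaInformative}). First I would analyze the pre-check sampling in $\mathcal{P}_{11}$: since $x$ is drawn with probability $|S_x|/\sum_z|S_z|$ and $y$ is then drawn uniformly from $S_x$, the probability of producing a given ordered pair $(x,y)$ with $d(x,y)\le\lambda$ is $\frac{|S_x|}{\sum_z|S_z|}\cdot\frac{1}{|S_x|}=\frac{1}{\sum_z|S_z|}$, which is the same for every such pair. Hence, before the oracle test, the procedure draws \emph{uniformly} from $K:=\{(x,y)\in X^{[2]}:d(x,y)\le\lambda\}$; this is exactly why the weighting by neighbourhood size is needed. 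The acceptance rule keeps a draw iff $C^*(x,y)=1$, so by the same geometric-series rejection argument as in Lemma~\ref{lemma:weightedNegUniform}, the output law $T$ is the uniform distribution over $K^+:=K\cap X^{[2]+}=\{(x,y):d(x,y)\le\lambda,\ C^*(x,y)=1\}$. In other words $T(\cdot)=P^+(\cdot\mid K^+)$.

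Next I would record the single place the hypothesis enters. Because $U^2$ conditioned on the event $C^*(x,y)=1$ is by definition $P^+$ (Defn.~\ref{defn:restrictedDistribution}), the bound in Eqn.~\ref{eqn:alphaInformative} reads exactly $\mathbf{P}_{(x,y)\sim P^+}[d(x,y)>\lambda]\le\alpha$. Since $K^+\subseteq X^{[2]+}$ is precisely the set of same-cluster pairs at distance $\le\lambda$, this says $P^+(X^{[2]+}\setminus K^+)\le\alpha$, equivalently $P^+(K^+)\ge 1-\alpha$.

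With $T$ identified as $P^+(\cdot\mid K^+)$, the comparison is a short decomposition. Fix any labelling $h$ and set $E=\{(x,y):h(x,y)=0\}$. Writing $P^+(E)=P^+(E\cap K^+)+P^+(E\setminus K^+)$ and using $P^+(E\cap K^+)=P^+(K^+)\,T(E)$, I obtain
$$P^+(E)-T(E)=\bigl(P^+(K^+)-1\bigr)T(E)+P^+(E\setminus K^+).$$
The first term has absolute value at most $P^+(X^{[2]+}\setminus K^+)\le\alpha$ (since $T(E)\le 1$), and the second satisfies $P^+(E\setminus K^+)\le P^+(X^{[2]+}\setminus K^+)\le\alpha$; the triangle inequality then yields the claimed bound $2\alpha$.

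I expect the only real content to sit in the first paragraph, namely verifying that the size-proportional draw collapses to a uniform draw over $K$ and that rejection converts this into the uniform law on $K^+$; once $T$ is recognized as $P^+(\cdot\mid K^+)$, everything else is routine. The one subtlety I would flag is the diagonal: $S_x$ must exclude $x$ itself (or draws with $y=x$ must be discarded) so that outputs genuinely lie in $X^{[2]}$ and $K^+\subseteq X^{[2]+}$; otherwise the trivially same-cluster diagonal pairs would leak into $T$. I also note that the two terms above carry opposite signs, so a slightly sharper accounting gives $\alpha$ rather than $2\alpha$, but the stated $2\alpha$ already suffices for the downstream sample-complexity argument.
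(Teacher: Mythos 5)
Your proposal is correct and follows essentially the same route as the paper: both arguments identify $T$, via the size-proportional draw plus the geometric rejection series, as the uniform distribution over $K^+$ (i.e.\ $P^+$ conditioned on $K^+$), and then invoke only the $\alpha$-half of informativeness to get $P^+(K^+)\ge 1-\alpha$. Your closing decomposition is in fact slightly cleaner than the paper's, which bounds the two directions separately and passes through $\frac{1}{1-\alpha}\le 1+2\alpha$ (implicitly assuming $\alpha\le\frac{1}{2}$); you are also right that your accounting yields the sharper constant $\alpha$.
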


\noindent Note that to sample one positive pair, procedure $\mc P_{11}$ might need to ask more than one same-cluster query. However, since the metric $d$ is $\beta$-informative, we `expect' the number of `extra' queries to be `small'. 

\begin{restatable}{lem}{posQueries}
\label{lemma:posQueries}
Given set $(X, d)$, a $C^*$-oracle and a parameter $\lambda$. Let $d$ be $\beta$-informative w.r.t $\lambda$ and let $q$ be the number of same-cluster queries made by $\mc P_{11}$ to the $C^*$-oracle. Then, $\mb E[q] \le \frac{1}{\beta}$.
\end{restatable}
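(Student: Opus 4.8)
The plan is to mirror the argument used for Lemma~\ref{lemma:negQueries}: identify the number of oracle calls $q$ as a geometric random variable and bound its success probability from below by $\beta$. First I would note that each pass through the \textbf{while} loop of $\mc P_{11}$ issues exactly one same-cluster query (at the line labelled~\ref{algLine:oracleP21}), so $q$ equals the number of loop iterations performed before a positive pair is output. Since the candidate pairs produced in distinct iterations are drawn independently from the same distribution, $q$ is a geometric random variable whose parameter $p$ is the probability that a single iteration succeeds, i.e.\ that the sampled pair $(x,y)$ satisfies $C^*(x,y)=1$. Consequently $\mb E[q] = 1/p$, and it remains only to show $p \ge \beta$.

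Next I would pin down the distribution of the candidate pair $(x,y)$ drawn in lines~\ref{algLine:size}--\ref{algLine:sample}. Selecting $x$ with probability $|S_x|/\sum_z |S_z|$ and then $y$ uniformly from $S_x$ assigns to each pair with $y \in S_x$ probability $\frac{|S_x|}{\sum_z |S_z|}\cdot\frac{1}{|S_x|} = \frac{1}{\sum_z |S_z|}$, independent of the pair. Hence the candidate is uniform over $K = \{(x,y): d(x,y)\le\lambda\}$, and the per-iteration success probability is
$$p = \underset{(x,y)\sim \mathrm{Unif}(K)}{\mb P}\big[\,C^*(x,y)=1\,\big] = \underset{(x,y)\sim U^2}{\mb P}\big[\,C^*(x,y)=1 \mid d(x,y)\le\lambda\,\big],$$
since conditioning the uniform distribution $U^2$ on the event $d(x,y)\le\lambda$ is exactly the uniform distribution on $K$. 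By the $\beta$-informative hypothesis (Eqn.~\ref{eqn:betaInformative}) this conditional probability is at least $\beta$, so $p\ge\beta$ and $\mb E[q]=1/p\le 1/\beta$, as claimed.

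The argument is short, and the one place I would be careful is the identification $p = \mb P_{U^2}[\,C^*(x,y)=1 \mid d(x,y)\le\lambda\,]$: I must make sure the sampling over ordered pairs $(x,y)$ with $y\in S_x$ matches the event $\{d(x,y)\le\lambda\}$ used in Definition~\ref{defn:informativeMetric}, and in particular that degenerate pairs (e.g.\ $y=x$, which always satisfy $d\le\lambda$) are excluded from $S_x$ so that $K\subseteq X^{[2]}$; otherwise the diagonal would inflate the denominator and a separate bookkeeping step would be needed. Assuming $S_x$ ranges over $y\neq x$, the ordered-pair normalisation $\sum_z |S_z|$ cancels identically between numerator and denominator, so no factor is lost and the clean bound $\mb E[q]\le 1/\beta$ follows.
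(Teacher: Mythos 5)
Your proof is correct and follows essentially the same route as the paper's: identify $q$ as a geometric random variable with per-iteration success probability $p$ and bound $p \ge \beta$ via the $\beta$-informativeness condition. The only difference is that you explicitly verify that lines~\ref{algLine:size}--\ref{algLine:sample} induce the uniform distribution on $K$ so that $p$ equals the conditional probability in Eqn.~\ref{eqn:betaInformative}, a step the paper simply asserts with ``we know that $p \ge \beta$.''
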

\begin{proof}
Let $p$ denote the probability that a positive pair is sampled during an iteration. We know that $p \ge \beta$. Let $q$ be a random variable denoting the number of iterations (or trials) before a positive pair is sampled. Then, $q$ is a geometric random variable. $\mb E[q] = \frac{1}{p} \le \frac{1}{\beta}$.
\end{proof}

\section{Sample and query complexity of RCC}
\label{section:sampleAndQueryComplexity}
In the previous section, we developed a sampling procedure for positive and negative pairs. We showed that the procedures sample according to  distributions $T_1$ and $T_2$ which approximate $P^-$ and $P^+$ respectively. Given a class of clusterings $\mc F$, we use our distributions $T_1$ and $T_2$ to estimate the negative and positive components of the loss for each clustering $C \in \mc F$. We then choose the clustering $\hat C$ with the minimum estimated loss. Using standard VC-Dimension theory, it is easy to show that the loss of the clustering $\hat C$ is close to the loss of best clustering in $\mc F$, as long the VC-Dimension of $\mc F$ is finite. 

The loss function $L_{d_{C^*}}^{w_1, w_2}$ is the sum of the sizes of two sets. However, in this section it would be more convenient to work with bounded loss functions. Let $\gamma_0 = \underset{(x, y) \sim U^2}{\mb P}\enspace \big[\enspace C^*(x, y) = 1\big]$ and define $\mu = \frac{w_1 \gamma_0}{w_1 \gamma_0 + w_2(1-\gamma_0)}$. Then we see that minimizing Eqn. \ref{eqn:RCCMain}, is the same as minimizing
\begin{definition}[Normalized correlation loss]
\begin{align*}
  L_{C^*}(C) = & \mu \underset{(x, y) \sim P^+}{\mb P} \big[ C(x, y) = 0 ] + (1-\mu) \underset{(x, y) \sim P^-}{\mb P} \big[ C(x, y) = 1] \numberthis\label{eqn:RCCV2}
\end{align*}
\end{definition}
For the remainder of the section, we work with this formulation of the loss function. We describe this procedure in Alg. \ref{alg:ERM}. 

\RestyleAlgo{ruled}
\SetAlgoNoLine
\LinesNumbered
\SetNlSkip{-0.4em}
\begin{algorithm}[h]
\label{alg:ERM}
\caption{Empirical Risk Minimization}
\Indp\KwIn{$( X, d)$, a set of clusterings $\mc F$, a $C^*$-oracle, parameter $\lambda$ and sizes $m_+$ and $m_-$.}
\KwOut{$ C \in \mc F$}

\vspace{0.1in} 
Sample a sets $S_+$ and $S_-$ of sizes $m_+$ and $m_-$ using procedures $\mc P_{11}$ and $\mc P_0$.\\
For every $C \in \mc F$ and define 
\vspace{-0.1in}\begin{align*}
  &\hat E(C) = \frac{|\{(x, y) \in S_+: C(x, y) = 0\}|}{|S_+|}\\
  &\hat G(C) = \frac{|\{(x, y) \in S_-: C(x, y) = 0\}|}{|S_-|}
\end{align*}

Define $\hat L(h) = \mu \hat E(h) + (1-\mu)\hat G(h)$. \label{algLine:alpha}\\
Output $\argmin_{C \in \mc F} \enspace \hat L(l_{C})$ \label{algLine:ERM}
\end{algorithm}

\begin{restatable}{thrm}{sampleComplexity}
\label{thm:sampleComplexity}
Given metric space $(X, d)$, a class of clusterings $\mc F$ and a threshold parameter $\lambda$. Given $\epsilon, \delta \in (0, 1)$ and a $C^*$-oracle. Let $d$ be $(\alpha, \beta)$-informative and $X$ be $\gamma$-skewed w.r.t $\lambda$ and $C^*$. Let $\mc A$ be the ERM-based approach as described in Alg. \ref{alg:ERM} and $\hat C$ be the output of $\mc A$. If  
\begin{align}
  &m_-, m_+ \enspace \ge a\frac{\vcdim({\mc F}) + \log(\frac{2}{\delta})}{\epsilon^2} 
\end{align}
where $a$ is a global constant then with probability atleast $1-\delta$ (over the randomness in the sampling procedure), we have that $$L_{C^*}(\hat C) \enspace\le\enspace \min_{\mc C \in \mc F} L_{C^*}(\mc C) + 3\alpha + \epsilon$$
\end{restatable}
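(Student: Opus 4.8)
Throughout, view each clustering $C \in \mc F$ as the binary predictor $(x,y) \mapsto C(x,y)$ on $X^{[2]}$; the induced function class (and its pointwise complement $C(x,y)=0$) has VC dimension $\vcdim(\mc F)$. Write $E(C) = \mb P_{(x,y)\sim P^+}[C(x,y)=0]$ and $G(C) = \mb P_{(x,y)\sim P^-}[C(x,y)=1]$, so that $L_{C^*}(C) = \mu E(C) + (1-\mu)G(C)$. The key observation is that the empirical quantities in Alg.~\ref{alg:ERM} do \emph{not} estimate $E$ and $G$ against the target distributions directly: by Lemma~\ref{lemma:weightedNegUniform} the negative sample $S_-$ is i.i.d.\ from $P^-$ (so $\hat G$ is an unbiased estimate of $G$), but by Lemma~\ref{lemma:weightedPosApproxUniform} the positive sample $S_+$ is i.i.d.\ from a \emph{biased} distribution $T$. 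I therefore introduce the surrogate $E_T(C) = \mb P_{(x,y)\sim T}[C(x,y)=0]$ and the surrogate loss $\tilde L(C) = \mu E_T(C) + (1-\mu)G(C)$, which is exactly the quantity whose unbiased empirical counterpart is $\hat L$.

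The plan rests on two ingredients. \textbf{(i) Bias control.} Lemma~\ref{lemma:weightedPosApproxUniform} gives, uniformly over $C$, the two-sided bound $|E(C)-E_T(C)| \le 2\alpha$, hence $|L_{C^*}(C)-\tilde L(C)| \le 2\mu\alpha \le 2\alpha$. Moreover $T$ is supported on the near-diagonal positive pairs $\{d\le\lambda\}\cap X^{[2]+}$ and differs from $P^+$ only by discarding the far positive pairs, whose $P^+$-mass is at most $\alpha$ by $(\alpha,\beta)$-informativeness; this makes the direction $E(C) \le E_T(C) + \alpha$ a \emph{one-sided} bound with the smaller constant $\alpha$. \textbf{(ii) Uniform convergence.} Since $S_+ \sim T$ and $S_- \sim P^-$ are i.i.d., the fundamental theorem of statistical learning applies to each component: choosing the global constant $a$ appropriately so that $m_+, m_- \ge a\frac{\vcdim(\mc F)+\log(2/\delta)}{\epsilon^2}$, a union bound ($\delta/2$ per side) yields, with probability at least $1-\delta$, $\sup_{C\in\mc F}|\hat E(C)-E_T(C)| \le \tfrac{\epsilon}{2}$ and $\sup_{C\in\mc F}|\hat G(C)-G(C)| \le \tfrac{\epsilon}{2}$, and therefore $\sup_{C\in\mc F}|\hat L(C)-\tilde L(C)| \le \tfrac{\epsilon}{2}$.

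Granting both ingredients, the conclusion follows from the standard ERM sandwich. Let $C^\dagger \in \argmin_{\mc C\in\mc F} L_{C^*}(\mc C)$ and recall $\hat C = \argmin_{\mc C\in\mc F}\hat L(\mc C)$. On the high-probability event,
\begin{align*}
L_{C^*}(\hat C) &\le \tilde L(\hat C) + \alpha \\
&\le \hat L(\hat C) + \tfrac{\epsilon}{2} + \alpha \\
&\le \hat L(C^\dagger) + \tfrac{\epsilon}{2} + \alpha \\
&\le \tilde L(C^\dagger) + \epsilon + \alpha \\
&\le L_{C^*}(C^\dagger) + 3\alpha + \epsilon,
\end{align*}
where the first step uses the one-sided bias bound (cost $\alpha$), the second and fourth use uniform convergence, the third uses optimality of $\hat C$ for $\hat L$, and the last uses Lemma~\ref{lemma:weightedPosApproxUniform} applied to $C^\dagger$ (cost $2\alpha$); the two biased-to-true conversions contribute $\alpha + 2\alpha = 3\alpha$.

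I expect the genuine subtlety to be the distribution-mismatch bookkeeping, i.e.\ extracting exactly the factor $3\alpha$. Applying the two-sided Lemma~\ref{lemma:weightedPosApproxUniform} naively at both $\hat C$ and $C^\dagger$ yields only $4\alpha+\epsilon$; the improvement to $3\alpha$ requires the one-sided observation that $T$ can only \emph{undersample} the at-most-$\alpha$ far-positive mass of $P^+$ and never over-count it, so one of the two conversions costs $\alpha$ rather than $2\alpha$. By contrast, the VC uniform-convergence step and the ERM comparison are routine once $\tilde L$ is set up as the unbiased population counterpart of $\hat L$, and the negative side contributes no $\alpha$ at all because Lemma~\ref{lemma:weightedNegUniform} gives exact sampling from $P^-$.
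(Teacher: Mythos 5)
Your proposal is correct and follows essentially the same route as the paper's proof: uniform convergence over $\mc F$ via the fundamental theorem, an ERM sandwich, and the asymmetric use of Lemma \ref{lemma:weightedPosApproxUniform} (cost $\alpha$ in one direction, $2\alpha$ in the other) to land on $3\alpha+\epsilon$ rather than $4\alpha+\epsilon$. The only cosmetic difference is that you route the argument through an explicit surrogate loss $\tilde L$, whereas the paper bounds $\hat L$ against $L$ directly; the bookkeeping is identical.
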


Next we show that to sample $m_+$ positive and $m_-$ negative pairs, the number of queries made to the $C^*$ is not too large. 
\begin{restatable}{thrm}{queryComplexity}[Query Complexity]
\label{thm:queryComplexity}
Let the framework be as in Thm. \ref{thm:sampleComplexity}. With probability atleast $1-\exp\big(-\frac{\nu^2m_-}{4}) - \exp\big(-\frac{\nu^2m_+}{4}\big)$ over the randomness in the sampling procedure, the number of same-cluster queries $q$ made by $\mc A$ is  
$$q \le (1+\nu)\bigg(\frac{m_-}{(1-\gamma)} + \frac{m_+}{\beta}\bigg)$$
\end{restatable}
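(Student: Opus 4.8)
The plan is to track the total number of same-cluster queries as a sum of two independent negative-binomial random variables and then control each of them with a multiplicative Chernoff bound. First I would observe that in both sampling procedures every iteration of the \textbf{while} loop issues exactly one oracle call (the conditional test $C^*(x,y)=0$ in $\mc P_0$ and $C^*(x,y)=1$ in $\mc P_{11}$), so the number of queries equals the number of iterations. Writing $Q_-$ for the number of iterations $\mc P_0$ needs to output $m_-$ negative pairs and $Q_+$ for the number $\mc P_{11}$ needs to output $m_+$ positive pairs, the total query count is $q = Q_- + Q_+$. By the analysis behind Lemmas \ref{lemma:negQueries} and \ref{lemma:posQueries}, $Q_-$ is a sum of $m_-$ i.i.d.\ geometric variables whose per-trial success probability is $p_- \ge 1-\gamma$, and $Q_+$ is a sum of $m_+$ i.i.d.\ geometric variables with per-trial success probability $p_+ \ge \beta$; in particular $\mb E[Q_-] \le \frac{m_-}{1-\gamma}$ and $\mb E[Q_+] \le \frac{m_+}{\beta}$.

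Next I would reduce the theorem to two one-sided tail bounds. Since $q = Q_- + Q_+$, the overshoot event is contained in the union
$$\Big\{ Q_- > (1+\nu)\tfrac{m_-}{1-\gamma}\Big\} \cup \Big\{ Q_+ > (1+\nu)\tfrac{m_+}{\beta}\Big\},$$
so by a union bound it suffices to show these have probabilities at most $\exp(-\nu^2 m_-/4)$ and $\exp(-\nu^2 m_+/4)$ respectively. I would treat the negative case; the positive case is identical with $\beta$ in place of $1-\gamma$ and $m_+$ in place of $m_-$.

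The key step, and the main obstacle, is bounding the upper tail of a negative-binomial variable. The clean device is the standard duality between ``number of trials to reach $m_-$ successes'' and ``number of successes in a fixed budget of trials'': setting $N = (1+\nu)\frac{m_-}{1-\gamma}$, the event $\{Q_- > N\}$ is exactly the event that fewer than $m_-$ of the first $N$ Bernoulli trials succeed. Taking the worst case $p_- = 1-\gamma$ (a smaller $p_-$ only makes the shortfall more likely), the number of successes $S_N$ in $N$ trials has mean $\mu = N(1-\gamma) = (1+\nu)m_-$. Applying the multiplicative Chernoff lower-tail bound $\mb P[S_N \le (1-\theta)\mu] \le \exp(-\theta^2 \mu /2)$ with the choice $(1-\theta)\mu = m_-$, i.e.\ $\theta = \frac{\nu}{1+\nu}$, gives
$$\mb P\big[Q_- > N\big] \;\le\; \exp\!\Big(-\frac{\nu^2 m_-}{2(1+\nu)}\Big).$$
For $\nu \in (0,1]$ we have $2(1+\nu) \le 4$, so this is at most $\exp(-\nu^2 m_-/4)$, as required; the same computation with $\beta$ yields $\exp(-\nu^2 m_+/4)$ for the positive pairs. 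Combining the two bounds through the union bound above completes the proof. I expect the only delicate points to be (i) justifying the negative-binomial/binomial duality rigorously and (ii) the mild regime restriction $\nu \le 1$ needed to replace the exact exponent $\nu^2 m/(2(1+\nu))$ by the cleaner $\nu^2 m/4$.
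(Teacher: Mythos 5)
Your argument is correct and arrives at the stated bound, but the key technical step is handled by a genuinely different (more self-contained) route than the paper's. The paper's proof is a two-line application of a black-box concentration inequality for sums of geometric random variables (Theorem~\ref{thm:geometricRV}, imported from the literature): $q_+$ and $q_-$ are sums of $m_+$ and $m_-$ i.i.d.\ geometrics with means at most $1/\beta$ and $1/(1-\gamma)$, and the cited tail $\exp(-\nu^2\mu n/(2(1+\nu)))$ is weakened to $\exp(-\nu^2 n/4)$. You instead re-derive that tail bound from first principles, via the duality between ``trials needed for $m$ successes'' and ``successes in a budget of $N$ trials'' combined with the lower-tail multiplicative Chernoff bound (Theorem~\ref{thm:chernoff2}); the exponent you obtain, $\nu^2 m/(2(1+\nu))$, is exactly the one appearing in Theorem~\ref{thm:geometricRV}, so in effect you have supplied a proof of the lemma the paper cites. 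What your route buys is transparency about the two caveats that the paper leaves implicit: the restriction $\nu \le 1$ needed to replace $2(1+\nu)$ by $4$ in the exponent, and the stochastic-domination step that justifies replacing the true per-trial success probabilities by their lower bounds $1-\gamma$ and $\beta$; what it costs is the minor integrality issue for $N$ that you already flag, which is routine to repair. (Incidentally, the paper's own proof contains a stray factor $(1-\epsilon)$ in its bound on $q_+$ that does not belong there; your version avoids it.)
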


\subsection{VC-Dimension of some common classes of clusterings}
In the previous section, we proved that the sample complexity of learning a class of clusterings $\mc F$ depends upon $\vcdim({\mc F})$. Recall that ${\mc F}$ is the class of labellings induced by the clusterings in $\mc F$. In this section, we prove upper bounds on the VC-Dimension for some common class of clusterings. 

\begin{restatable}{thrm}{VCDim}
Given a finite set $\mc X$ and a finite class $\mc F = \{C_1, \ldots, C_s\}$ of clusterings of $\mc X$.
$$\vcdim({\mc F}) \le g(s)$$ where $g(s)$ is the smallest integer $n$ such that $B_{\sqrt n} \ge s$ where $B_i$ is the $i^{th}$ bell number \cite{bell2010number}. 
\end{restatable}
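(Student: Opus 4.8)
The plan is to bound $\vcdim(\mc F)$ directly from the definition of shattering, exploiting the fact that a clustering cannot label pairs arbitrarily: the labels it assigns are forced to be consistent with a set partition, and the number of partitions of a $k$-element set is exactly the Bell number $B_k$. So the whole argument is a counting argument that plays the $2^n$ labellings needed to shatter $n$ pairs against the number of partition-induced labellings available.

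Concretely, suppose $\mc F$ shatters a set $Y \subseteq \mc X^{[2]}$ with $|Y| = n = \vcdim(\mc F)$. First I would localize the argument to the points that actually occur in $Y$: let $Z \subseteq \mc X$ be the set of endpoints of pairs in $Y$ and put $k = |Z|$. Since every pair of $Y$ is an unordered pair from $Z$, we have $Y \subseteq Z^{[2]}$ and hence $n \le \binom{k}{2} \le k^2$, which gives $k \ge \sqrt{n}$. Next, observe that the label a clustering $C$ assigns to any pair inside $Z^{[2]}$ depends only on the partition that $C$ induces on $Z$: two clusterings that partition $Z$ identically agree on all of $Y$. Therefore the number of distinct labellings that the $s$ clusterings of $\mc F$ can produce on $Y$ is at most $\min(s, B_k)$, the number of partitions of $Z$ capped by the size of the class. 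Because $Y$ is shattered, $\mc F$ must realize all $2^n$ labellings of $Y$, so $2^n \le \min(s, B_k) \le s$.

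It remains to turn $2^n \le s$ into the stated threshold. Using $k \ge \sqrt{n}$ together with the crude Bell bound $B_m \le m^m$, one gets $B_{\lfloor \sqrt n \rfloor} \le (\sqrt n)^{\sqrt n} < 2^n \le s$, where the strict inequality $(\sqrt n)^{\sqrt n} < 2^n$ reduces to $\log_2 n < 2\sqrt n$, which holds for every $n \ge 1$. Thus $B_{\lfloor \sqrt n\rfloor} < s$, and since $g(s)$ is by definition the least $m$ with $B_{\lfloor\sqrt m\rfloor}\ge s$ and the map $m \mapsto B_{\lfloor\sqrt m\rfloor}$ is non-decreasing, the inequality $B_{\lfloor\sqrt n\rfloor} < s$ forces $n < g(s)$, i.e. $\vcdim(\mc F) \le g(s)$.

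The step I expect to be the main obstacle is the partition-counting bound of the second paragraph, i.e. making precise that the achievable labellings on $Y$ are counted by partitions rather than by arbitrary binary patterns. This is exactly where transitivity of the same-cluster relation enters, and it is the reason Bell numbers appear in place of $2^{\binom{k}{2}}$. The remaining work (the pairs-to-points inequality $n \le \binom{k}{2}$ and the numeric comparison $B_{\lfloor\sqrt n\rfloor} < 2^n$) is routine. I would also remark that this route in fact yields the slightly sharper $\vcdim(\mc F) \le \log_2 s$, of which the stated Bell-number bound is a weaker but self-contained reformulation.
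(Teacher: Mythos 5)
Your proof is correct, but it reaches the bound by a genuinely different route than the paper. The paper argues in the contrapositive: for a set $M^2$ of more than $g(s)$ pairs, the endpoint set $M$ has more than $\sqrt{g(s)}$ points, hence admits $B_{|M|} > B_{\sqrt{g(s)}} \ge s$ partitions, so some partition of $M$ is induced by no member of $\mc F$, and from this it concludes that $M^2$ cannot be shattered. That last step is the delicate one: two distinct partitions of $M$ can induce the same $\{0,1\}$-labelling on $M^2$ when $M^2$ is a proper subset of $M^{[2]}$, so a missing partition does not immediately yield a missing labelling; the paper leaves this unaddressed. Your argument sidesteps the issue by counting labellings rather than partitions: a shattered set of $n$ pairs forces $2^n \le s$ outright, and the purely numerical inequality $B_{\lfloor\sqrt n\rfloor} \le (\sqrt n)^{\sqrt n} < 2^n$ (equivalent to $\log_2 n < 2\sqrt n$) then converts $2^n \le s$ into $n < g(s)$. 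This is airtight, and, as you observe, it exposes the stated theorem as a weakening of the trivial bound $\vcdim(\mc F) \le \log_2 s$. One remark on your own framing: the partition-counting step you single out as the ``main obstacle'' (at most $\min(s, B_k)$ realizable labellings on $Y$) and the endpoint bound $k \ge \sqrt n$ are both unused in your final chain --- only the trivial $\le s$ count and the comparison $B_{\lfloor\sqrt n\rfloor} < 2^n$ do any work --- so transitivity of the same-cluster relation never actually enters your proof, whereas it is the (incompletely executed) engine of the paper's.
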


\noindent Note that $B_{\sqrt n} \in o(2^{n})$. Thus, the $\vcdim$ of a list of clusterings is in $o( \log s)$. Next, we discuss another common class of clusterings, namely hierarchical clustering trees. 

\begin{definition}[Hierarchical clustering tree]
Given a set $X$. A hierarchical clustering tree $T$ is a rooted binary tree with the elements of $X$ as the leaves. 
\end{definition}

\noindent Every pruning of a hierarchical clustering tree is a clustering of the set $X$. A clustering tree contains exponentially many (in the size of $\mc X$) clusterings. Given $\mc F = \{T_1, \ldots, T_s\}$ consists of $s$ different hierarchical clustering trees, the following theorem bounds the VC-Dimension of ${\mc F}$.

\begin{restatable}{thrm}{VCDimT}
Given a finite set $\mc X$ and a finite class $\mc F = \{T_1, \ldots, T_s\}$ where each $T_i$ is a hierarchical clustering over $\mc X$. Then 
$$\vcdim({\mc F}) \le g(s)$$ where $g(s)$ is the smallest integer $n$ such that $\frac{\sqrt n!}{\lfloor \sqrt n/2 \rfloor! \enspace 2^{\lfloor \sqrt n/2 \rfloor}} \ge s $
\end{restatable}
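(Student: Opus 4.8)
The plan is to bound $\vcdim(\mc F)$ directly from the definition of shattering, mirroring the argument used for the preceding theorem on a finite list of clusterings but replacing its count of \emph{partitions} of a vertex subset by a count of \emph{matchings}. Suppose $\mc F$ shatters a set $A \subseteq \mc X^{[2]}$ of $n = \vcdim(\mc F)$ pairs, and let $V \subseteq \mc X$ be the set of endpoints occurring in $A$, with $v = |V|$. Since the $n$ pairs are distinct elements of $\binom{V}{2}$, we have $n \le \binom{v}{2} < v^2/2$, so $v > \sqrt{n}$. The first observation is that, for any clustering $C$, the label $C(x,y)$ of a pair $(x,y) \in A$ is determined entirely by the partition that $C$ induces on $V$; hence the number of distinct labellings of $A$ that the prunings of $T_1, \ldots, T_s$ can jointly realize is at most the number of distinct partitions of $V$ realizable by those prunings. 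This reduces the problem to a purely combinatorial counting of induced partitions.

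Since shattering $A$ requires all $2^n$ labellings, the core of the proof is to lower bound $s$ in terms of $v$. The key lemma (the crux, discussed below) would show that realizing the patterns needed to shatter $A$ forces $s \ge \frac{v!}{\lfloor v/2\rfloor!\,2^{\lfloor v/2\rfloor}}$, i.e. $s$ must be at least the number of (near-)perfect matchings of $V$. Granting this, monotonicity of the matching count in $v$ together with $v > \sqrt n$ gives $s \ge \frac{\sqrt n!}{\lfloor \sqrt n/2\rfloor!\,2^{\lfloor \sqrt n/2\rfloor}}$. By definition of $g(s)$ as the least $n$ for which this quantity reaches $s$, this inequality forces $n \le g(s)$, which is exactly the claimed bound. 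This final inversion step, and the bookkeeping of the floors for odd versus even $v$, is routine once the lemma is in hand.

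The hard part is the counting lemma, and it is precisely here that matchings, rather than the Bell number of the flat-clustering theorem, enter. A single hierarchical tree is far more expressive than a single clustering --- each tree already encodes exponentially many clusterings through its prunings --- so one cannot charge one labelling of $A$ to one member of $\mc F$, as the list argument does. Instead I would restrict each tree $T_i$ to the minimal subtree spanning $V$, analyze which partitions of $V$ its prunings can produce, and argue that the partitions that independently toggle the pairs of $A$ (the ones relevant to shattering) are governed by laminar/matching structure on $V$, so that a single tree contributes comparatively few shattering-relevant patterns and the collection must therefore be large. Making this per-tree bound precise and correctly aggregating it over the $s$ trees to obtain the matching-count inequality is the delicate and error-prone step; everything before and after it is the reduction and monotonicity argument sketched above.
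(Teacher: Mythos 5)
Your skeleton matches the paper's: pass from the shattered set of pairs $A$ to its endpoint set $V$, note that the labellings of $A$ realized by prunings are controlled by the partitions of $V$ they induce, and drive the bound through the count $\frac{v!}{\lfloor v/2\rfloor!\,2^{\lfloor v/2\rfloor}}$ of partitions of $V$ into blocks of size two. But the step you defer as ``delicate and error-prone'' --- that ``a single tree contributes comparatively few shattering-relevant patterns'' --- is the entire content of the theorem, and ``comparatively few'' is not the statement that makes the count come out right. The paper's key lemma is exact: among all partitions of $V$ into pairs (plus one singleton when $|V|$ is odd), \emph{any single hierarchical tree realizes at most one as a pruning}. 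The proof is a short laminarity argument: if two distinct pair-matchings $C_i \neq C_j$ were both induced by prunings of the same tree $T$, there would be points $s_1, s_2, s_3$ with $\{s_1, s_2\}$ a cluster of $C_i$ and $\{s_1, s_3\}$ a cluster of $C_j$; each two-element cluster must be the leaf set of some node of $T$, the two nodes share the leaf $s_1$ so one must be an ancestor of the other, yet $s_2$ lies under one and not the other, and $s_3$ vice versa --- a contradiction. The restriction to blocks of size exactly two is what forces two distinct such partitions to \emph{cross}, and crossing set systems are exactly what a single tree cannot represent; for unrestricted partitions a single tree realizes exponentially many, so no vague ``few per tree'' bound aggregated over $s$ trees will yield the matching-count inequality. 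With the ``at most one per tree'' lemma in hand, $s$ trees realize at most $s$ of the $\frac{v!}{\lfloor v/2\rfloor!\,2^{\lfloor v/2\rfloor}}$ matchings, some matching is missed, the labelling of $A$ it induces is unrealizable, and your inversion via $v > \sqrt{n}$ and the definition of $g(s)$ finishes as you describe.

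One further point you should make explicit (the paper is also terse here): to conclude that a missed matching defeats shattering, you need that no \emph{other} realizable partition of $V$ agrees with that matching on precisely the pairs in $A$. This is immediate when $A$ contains all pairs over $V$, and requires a sentence otherwise.
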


\section{Conclusion}
\label{section:conclusion}
We introduced a promise version of correlation clustering. We proved that the promise version is NP-Hard. Furthermore, the problem is NP-Hard even when we are allowed to make $o(|X|)$ queries to a same-cluster oracle (where $X$ is the clustering instance). We then introduced a restricted version of correlation clustering. We developed a sampling procedure (with the help of the same-cluster oracle) to sample same-cluster and different-cluster pairs. We then used this procedure to solve the restricted variant.    

\bibliography{deDuplication}

\begin{thebibliography}{}

\bibitem[A000108, ]{bell2010number}
A000108, S.
\newblock The on-line encyclopedia of integer sequences.
\newblock {\em published electronically at https://oeis.org, 2010}.

\bibitem[Ailon et~al., 2018]{ailon2018approximate}
Ailon, N., Bhattacharya, A., and Jaiswal, R. (2018).
\newblock Approximate correlation clustering using same-cluster queries.
\newblock In {\em Latin American Symposium on Theoretical Informatics}, pages
  14--27. Springer.

\bibitem[Ashtiani et~al., 2016]{ashtiani2016clustering}
Ashtiani, H., Kushagra, S., and Ben-David, S. (2016).
\newblock Clustering with same-cluster queries.
\newblock In {\em Advances in neural information processing systems}, pages
  3216--3224.

\bibitem[Balcan and Blum, 2008]{balcan2008clustering}
Balcan, M.-F. and Blum, A. (2008).
\newblock Clustering with interactive feedback.
\newblock In {\em International Conference on Algorithmic Learning Theory},
  pages 316--328. Springer.

\bibitem[Bansal et~al., 2004]{bansal2004correlation}
Bansal, N., Blum, A., and Chawla, S. (2004).
\newblock Correlation clustering.
\newblock {\em Machine Learning}, 56(1-3):89--113.

\bibitem[Basu et~al., 2002]{basu2002semi}
Basu, S., Banerjee, A., and Mooney, R. (2002).
\newblock Semi-supervised clustering by seeding.
\newblock In {\em In Proceedings of 19th International Conference on Machine
  Learning (ICML-2002}. Citeseer.

\bibitem[Basu et~al., 2004]{basu2004probabilistic}
Basu, S., Bilenko, M., and Mooney, R.~J. (2004).
\newblock A probabilistic framework for semi-supervised clustering.
\newblock In {\em Proceedings of the tenth ACM SIGKDD international conference
  on Knowledge discovery and data mining}, pages 59--68. ACM.

\bibitem[Bilenko et~al., 2003]{bilenko2003adaptive}
Bilenko, M., Mooney, R., Cohen, W., Ravikumar, P., and Fienberg, S. (2003).
\newblock Adaptive name matching in information integration.
\newblock {\em IEEE Intelligent Systems}, 18(5):16--23.

\bibitem[Blumer et~al., 1989]{blumer1989learnability}
Blumer, A., Ehrenfeucht, A., Haussler, D., and Warmuth, M.~K. (1989).
\newblock Learnability and the vapnik-chervonenkis dimension.
\newblock {\em Journal of the ACM (JACM)}, 36(4):929--965.

\bibitem[Brown, 2011]{brown2011wasted}
Brown, D.~G. (2011).
\newblock How i wasted too long finding a concentration inequality for sums of
  geometric variables.
\newblock {\em Found at https://cs. uwaterloo. ca/\~{} browndg/negbin. pdf}, 6.

\bibitem[Charikar et~al., 2005]{charikar2005clustering}
Charikar, M., Guruswami, V., and Wirth, A. (2005).
\newblock Clustering with qualitative information.
\newblock {\em Journal of Computer and System Sciences}, 71(3):360--383.

\bibitem[Chu et~al., 2016]{chu2016distributed}
Chu, X., Ilyas, I.~F., and Koutris, P. (2016).
\newblock Distributed data deduplication.
\newblock {\em Proceedings of the VLDB Endowment}, 9(11):864--875.

\bibitem[Cochinwala et~al., 2001]{cochinwala2001efficient}
Cochinwala, M., Kurien, V., Lalk, G., and Shasha, D. (2001).
\newblock Efficient data reconciliation.
\newblock {\em Information Sciences}, 137(1-4):1--15.

\bibitem[Cohen, 1998]{cohen1998integration}
Cohen, W.~W. (1998).
\newblock Integration of heterogeneous databases without common domains using
  queries based on textual similarity.
\newblock In {\em ACM SIGMOD Record}, volume~27, pages 201--212. ACM.

\bibitem[Demaine et~al., 2006]{demaine2006correlation}
Demaine, E.~D., Emanuel, D., Fiat, A., and Immorlica, N. (2006).
\newblock Correlation clustering in general weighted graphs.
\newblock {\em Theoretical Computer Science}, 361(2-3):172--187.

\bibitem[Elmagarmid et~al., 2007]{elmagarmid2007duplicate}
Elmagarmid, A.~K., Ipeirotis, P.~G., and Verykios, V.~S. (2007).
\newblock Duplicate record detection: A survey.
\newblock {\em IEEE Transactions on knowledge and data engineering},
  19(1):1--16.

\bibitem[Hern{\'a}ndez and Stolfo, 1995]{hernandez1995merge}
Hern{\'a}ndez, M.~A. and Stolfo, S.~J. (1995).
\newblock The merge/purge problem for large databases.
\newblock In {\em ACM Sigmod Record}, volume~24, pages 127--138. ACM.

\bibitem[Ilyas et~al., 2015]{ilyas2015trends}
Ilyas, I.~F., Chu, X., et~al. (2015).
\newblock Trends in cleaning relational data: Consistency and deduplication.
\newblock {\em Foundations and Trends{\textregistered} in Databases},
  5(4):281--393.

\bibitem[Jaro, 1980]{jaro1980unimatch}
Jaro, M.~A. (1980).
\newblock {\em UNIMATCH, a Record Linkage System: Users Manual}.
\newblock Bureau of the Census.

\bibitem[Kulis et~al., 2009]{kulis2009semi}
Kulis, B., Basu, S., Dhillon, I., and Mooney, R. (2009).
\newblock Semi-supervised graph clustering: a kernel approach.
\newblock {\em Machine learning}, 74(1):1--22.

\bibitem[Levenshtein, 1966]{levenshtein1966binary}
Levenshtein, V.~I. (1966).
\newblock Binary codes capable of correcting deletions, insertions, and
  reversals.
\newblock In {\em Soviet physics doklady}, volume~10, pages 707--710.

\bibitem[Mitzenmacher and Upfal, 2005]{mitzenmacher2005probability}
Mitzenmacher, M. and Upfal, E. (2005).
\newblock {\em Probability and computing: Randomized algorithms and
  probabilistic analysis}.
\newblock Cambridge university press.

\bibitem[Monge et~al., 1996]{monge1996field}
Monge, A.~E., Elkan, C., et~al. (1996).
\newblock The field matching problem: Algorithms and applications.
\newblock In {\em KDD}, pages 267--270.

\bibitem[Shalev-Shwartz and Ben-David, 2014]{shalev2014understanding}
Shalev-Shwartz, S. and Ben-David, S. (2014).
\newblock {\em Understanding machine learning: From theory to algorithms}.
\newblock Cambridge university press.

\bibitem[Smith and Waterman, 1981]{waterman1981identification}
Smith, T. and Waterman, M. (1981).
\newblock Identification of common molecular subsequence.
\newblock {\em J Mol. Biol}, 147.

\bibitem[Vapnik and Chervonenkis, 2015]{vapnik2015uniform}
Vapnik, V.~N. and Chervonenkis, A.~Y. (2015).
\newblock On the uniform convergence of relative frequencies of events to their
  probabilities.
\newblock In {\em Measures of complexity}, pages 11--30. Springer.

\end{thebibliography}
\bibliographystyle{apalike}

\appendix
\section{Hardness of PCC in the presence of an oracle}
\begin{theorem}
Given that the Exponential Time Hypothesis (ETH) holds then any algorithm for the Promise Correlation Clustering problem  that runs in polynomial time makes $\Omega(|X|)$ same-cluster queries for all $M \ge 3$ and for $\alpha = 0$ and $\beta = \frac{1}{2}$. 
\end{theorem}

\noindent The exponential time hypothesis says that any solver for $3$-SAT runs in $2^{o(m)}$ time (where $m$ is the number of clauses in the $3$-SAT formula). We use a reduction from $3$-SAT to 3DM to X3C to show that the exact cover by 3-sets (X3C) problem also can't be solved in $2^{o(m)}$ time (if ETH holds). Then, using the reduction from the previous section implies that PCC also can't be solved in $2^{o(n)}$ time. Thus, any query based algorithm for PCC needs to make atleast $\Omega(n)$ queries where $n = |X|$ is the number of vertices in the graph. 

\begin{definition}[3-SAT].\\
Input: A boolean formulae $\phi$ in 3CNF with $n$ literals and $m$ clauses. Each clause has exactly three literals. \\
Output: YES if $\phi$ is satisfiable, NO otherwise. 
\end{definition}

\noindent\textbf{Exponential Time Hypothesis}\\
There does not exist an algorithm which decides 3-SAT  and runs in $2^{o(m)}$ time.

\begin{definition}[3DM].\\
Input: Sets $W, X$ and $Y$ and a set of matches $M \subseteq W \times X \times Y$ of size $m$.  \\
Output: YES if there exists $M' \subseteq M$ such that each element of $W, X, Y$ appears exactly once in $M'$. NO otherwise. 
\end{definition}

\noindent To prove that (X3C) is NP-Hard, the standard We will reduce 3-SAT to 3-dimensional matching problem. 3DM is already known to be NP-Hard. However, the standard reduction of 3-SAT to 3DM constructs a set with $|M| \in \Theta(m^2 n^2)$. Hence, using the standard reduction, the exponential time hypothesis would imply there does not exist an algorithm for 3DM which runs in $\Omega(m^\frac{1}{4})$. Our reduction is based on the standard reduction. However, we make some clever optimizations especially in the way we encode the clauses. This helps us improve the lower bound to $\Omega(m)$.

\begin{figure}[!ht]
	\centering
	\begin{tikzpicture}
	  \draw (0,0) node {$\bullet$} -- (1,0) node{$\bullet$} -- (0.5,0.5) node{$\bullet$} -- cycle;
	  \draw (1,0) node{$\bullet$} -- (2,0) node{$\bullet$}
  -- (2,-1) node{$\bullet$} -- cycle;
	  \draw (2,-1) node{$\bullet$} -- (2,-2) node{$\bullet$} -- (2.5,-1.5) node{$\bullet$} -- cycle;
	  \draw (2,-2) node{$\bullet$} -- (1,-3) node{$\bullet$} -- (2,-3) node{$\bullet$} -- cycle;
	  \draw (0,-3) node{$\bullet$} -- (1,-3) node{$\bullet$} -- (0.5,-3.5) node{$\bullet$} -- cycle;
	  \draw (0,-3) node{$\bullet$} -- (-1,-2) node{$\bullet$} -- (-1,-3) node{$\bullet$} -- cycle;
	  \draw (-1,-2) node{$\bullet$} -- (-1,-1) node{$\bullet$} -- (-1.5,-1.5) node{$\bullet$} -- cycle;
	  \draw (0,0) node{$\bullet$} -- (-1,-1) node{$\bullet$} -- (-1,0) node{$\bullet$} -- cycle;
	
	 \node  at (0.0,-0.3) {$b_4$};
	 \node  at (1.0,-0.3) {$a_1$};
	 \node  at (1.0,-0.6) {\scriptsize{(or $a_5$)}};
	 \node  at (2.0,0.3) {$c_1$};
	 \node  at (1.8,-1.2) {$b_1$};
	 \node  at (1.7,-1.9) {$a_2$};
	 \node  at (2.8,-1.5) {$c_1'$};
	 \node  at (2.0,-3.3) {$c_2$};
	 \node  at (1.0,-2.7) {$b_2$};
	 \node  at (0.0,-2.7) {$a_3$};
	 \node  at (0.5,-3.8) {$c_2'$};
	 \node  at (-1.0,-3.3) {$c_3$};
	 \node  at (-0.7,-2.0) {$b_3$};
	 \node  at (-0.7,-1.2) {$a_4$};
	 \node  at (-1.8,-1.6) {$c_3'$};
	 \node  at (-1.0,0.3) {$c_4$};
	 \node  at (0.5,0.8) {$c_4'$};
	 
	 \node  at (2.5,1.2) {$tf_1$};
	 \node  at (3.8,0.7) {$tf_1'$};
	 \node  at (4.3,0.0) {$tf_2$};
	 \node  at (4.3,-0.8) {$tf_2'$};
	 \node  at (4.3,-1.8) {$t_1$};
	 \node  at (3.4,-3.2) {$t_1'$};

	 \draw [line width=1mm] (4,0) -- (4,-1);
	 \draw [line width=1mm] (2.5,1) -- (3.5,0.5);
	 \draw [line width=1mm] (4,-2) -- (3,-3);
	 
  	\draw[dotted](4, 0) -- (2, 0);
  	\draw[dotted](4, -1) -- (2, 0);
  	\draw[dotted](4, 0) -- (2.5, -1.5);
  	\draw[dotted](4, -1) -- (2.5, -1.5);
  	\draw[dotted](4, 0) -- (5.1, 0.5);
  	\draw[dotted](4, 0) -- (5.1, 0.3);
  	\draw[dotted](4, 0) -- (5.1, -0.5);
  	\draw[dotted](4, 0) -- (5.1, -0.3);
  	\draw[dotted](4, -1) -- (5.1, 0.1);
  	\draw[dotted](4, -1) -- (5.1, -0.9);
  	\draw[dotted](4, -1) -- (5.1, -1.5);
  	\draw[dotted](4, -1) -- (5.1, -1.3);

  	\draw[dotted](2.5, 1) -- (2, 0);
  	\draw[dotted](3.5, 0.5) -- (2, 0);
  	\draw[dotted](2.5, 1) -- (2.5, -1.5);
  	\draw[dotted](3.5, 0.5) -- (2.5, -1.5);
  	\draw[dotted](2.5, 1) -- (3.3, 1.5);
  	\draw[dotted](2.5, 1) -- (3.3, 1.3);
  	\draw[dotted](2.5, 1) -- (3.5, 0.9);
  	\draw[dotted](2.5, 1) -- (3.5, 0.8);
  	\draw[dotted](3.5, 0.5) -- (4.2, 1.1);
  	\draw[dotted](3.5, 0.5) -- (4.2, 0.9);
  	\draw[dotted](3.5, 0.5) -- (3.5, 1.5);
  	\draw[dotted](3.5, 0.5) -- (4.5, 0.5);

  	\draw[dotted](4, -2) -- (2.5, -1.5);
  	\draw[dotted](3, -3) -- (2.5, -1.5);
  	\draw[dotted](4, -2) -- (5.1, -2);
  	\draw[dotted](4, -2) -- (5.1, -2.5);
  	\draw[dotted](3, -3) -- (4.2, -3);
  	\draw[dotted](3, -3) -- (4.2, -2.7);
  	\end{tikzpicture}
  	\caption{Part of graph $G$ constructed for the literal $x_1$. The figure is an illustration for when $x_1$ is part of four different clauses. The triangles (or hyper-edge) $(a_i, b_i, c_i)$ capture the case when $x_1$ is true and the other triangle $(b_i, c_i', a_{i+1})$ captures the case when $x_1$ is false. Assuming that a clause $C_j = \{x_1, x_2, x_3\}$, the hyper-edges containing $tf_i, tf_i'$ and $t_1, t_1'$ capture different settings. The hyper-edges containing $t_1, t_1'$ ensure that atleast one of the literals in the clause is true. The other two ensure that two variables can take either true or false values.}
\label{fig:3DMQueries}
\end{figure}

Our gadget is described in Fig. \ref{fig:3DMQueries}. For each literal $x_i$, let $m_i$ be the number of clauses in which the the literal is present. We construct a ``truth-setting" component containing $2m_i$ hyper-edges (or triangles). We add the following hyper-edges to $M$.
\begin{align*}
  &\{(a_k[i], b_k[i], c_k[i]): 1 \le k \le m_i\} \cup \{(a_{k+1}[i], b_k[i], c_k'[i]): 1 \le k \le m_i\}
\end{align*}
Note that one of $(a_k, b_k, c_k)$ or $(a_{k+1}, b_k, c_k')$ have to be selected in a matching $M'$. If the former is selected that corresponds to the variable $x_i$ being assigned true, the latter corresponds to false. This part is the same as the standard construction. 

For every clause $C_j = \{x_1, x_2, x_3\}$ we add three types of hyper-edges.  The first type ensures that atleast one of the literals is true. 
$$\{(c_k[i], t_1[j], t_1'[j]): x_i' \in C_j\} \cup \{(c_k'[i], t_1[j], t_1'[j]): x_i \in C_j\}$$ 
The other two types of hyper-edges (conected to the $tf_i$'s) say that two of the literals can be either true or false. Hence, we connect them to both $c_k$ and $c_k'$
\begin{align*}
  &\{(c_k[i], tf_1[j], tf_1'[j]): x_i' \text{ or }x_i\in C_j\} \cup \{(c_k[i], tf_2[j], tf_2'[j]): x_i \text{ or }x_i' \in C_j\}\\
  &\cup \{(c_k'[i], tf_1[j], tf_1'[j]): x_i' \text{ or }x_i\in C_j\} \cup \{(c_k'[i], tf_2[j], tf_2'[j]): x_i \text{ or }x_i' \in C_j\}
\end{align*}
Note that in the construction $k$ refers to the index of the clause $C_j$ in the truth-setting component corresponding to the literal $x_i$. Using the above construction, we get that
\begin{align*}
  & W = \{c_k[i], c_k'[i]\}\\
  & X = \{a_k[i]\} \cup \{t_1[j], tf_1[j], tf_2[j]\}\\
  & Y = \{b_k[i]\} \cup \{t_1'[j], tf_1'[j], tf_2'[j]\}
\end{align*} 
Hence, we see that $|W| = 2\sum_i m_i = 6m$. Now, $|X| = |Y| = \sum_i m_i + 3m = 6m$. And, we have that $|M| = 2\sum_i m_i + 15m = 21m$. Thus, we see that this construction is linear in the number of clauses. 

Now, if the 3-SAT formula $\phi$ is satisfiable then there exists a matching $M'$ for the 3DM problem. If a variable $x_i = T$ in the assignment then add $(c_k[i], a_k[i], b_k[i])$ to $M'$ else add $(c_k'[i], a_{k+1}[i], b_k[i])$. For every clause $C_j$, let $x_i$ (or $x_i'$) be the variable which is set to true in that clause. Add $(c_k'[i], t_1[j], t_1'[j])$  (or $(c_k[i], t_1[j], t_1'[j])$) to $M'$. For the remaining two clauses, add the hyper-edges containing $tf_1[j]$ and $tf_2[j]$ depending upon their assignments. Clearly, $M'$ is a matching. 

Now, the proof for the other direction is similar. If there exists a matching, then one of $(a_k, b_k, c_k)$ or $(a_{k+1}, b_k, c_k')$ have to be selected in a matching $M'$. This defines a truth assignment of the variables. Now, the construction of the clause hyper-edges ensures that every clause is satisfiable.

\begin{theorem}
If the exponential time hypothesis holds then there does not exist an algorithm which decides the three dimensional matching problem 3DM and runs in time $2^{o(m)}$.
\end{theorem}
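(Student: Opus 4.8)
The plan is to derive the claimed lower bound by composing the reduction constructed above with a hypothetical fast solver for 3DM and reading off a contradiction with the Exponential Time Hypothesis. Concretely, suppose for contradiction that some algorithm $\mc B$ decides 3DM in time $2^{o(|M|)}$. Given a 3-SAT formula $\phi$ with $m$ clauses, I would first run the reduction of Fig.~\ref{fig:3DMQueries} to produce a 3DM instance $(W, X, Y, M)$, then call $\mc B$ on this instance, and finally output its answer. The correctness of the reduction has already been verified in both directions in the paragraphs preceding this statement: a satisfying assignment of $\phi$ yields a matching $M'$ covering every element exactly once (select the true/false truth-setting triple for each variable and, for each clause, route the satisfied literal through the $t_1, t_1'$ gadget and the remaining two through the $tf$ gadgets), and conversely any such matching forces a consistent truth assignment that satisfies every clause. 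Hence the composed procedure decides 3-SAT correctly.

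The next step is to account for the running time. The reduction is computable in time polynomial in $m$, and---this is the crucial quantitative point---the instance it produces has size linear in $m$: summing the truth-setting components over all literals gives $\sum_i m_i = 3m$, so that $|W| = 6m$, $|X| = |Y| = 6m$, and $|M| = 21m$. Therefore $|M| = \Theta(m)$, with no hidden dependence on the number of variables $n$. It is precisely this linearity that drives the argument: the textbook reduction produces $|M| \in \Theta(m^2 n^2)$, which would only yield a $2^{\Omega(m^{1/4})}$ lower bound, whereas the optimized encoding of the clauses here keeps the blow-up linear.

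Combining these two observations completes the proof: the composed algorithm decides 3-SAT in time $\mathrm{poly}(m) + 2^{o(|M|)} = 2^{o(21m)} = 2^{o(m)}$, contradicting ETH. I would conclude that no $2^{o(m)}$-time algorithm for 3DM can exist.

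The main obstacle is not in this final composition---once the reduction and its linear output size are in hand, the argument is routine---but rather in the construction that precedes it, namely designing a reduction whose output size is linear rather than of higher polynomial degree in $m$. Within the statement as written, the one point that requires care is the overloading of the symbol $m$: it denotes the number of clauses of $\phi$ on the 3-SAT side and the cardinality $|M|$ on the 3DM side, and the clean transfer of the bound relies on these two quantities being linearly related, which is exactly what $|M| = 21m$ guarantees.
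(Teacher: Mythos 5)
Your proposal is correct and follows essentially the same route as the paper: the theorem is proved by composing the linear-size reduction ($|M| = 21m$) already constructed and verified in the preceding paragraphs with a hypothetical $2^{o(|M|)}$-time 3DM solver, yielding a $2^{o(m)}$ decision procedure for 3-SAT that contradicts ETH. Your added remark about the overloading of the symbol $m$ is a fair observation but does not change the argument.
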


\begin{corollary}
\label{cor:X3CLowerBound}
If the exponential time hypothesis holds then there does not exist an algorithm which decides exact cover by 3-sets problem (X3C) and runs in time $2^{o(m)}$.
\end{corollary}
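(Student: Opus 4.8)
The plan is to derive the corollary from the preceding theorem on 3DM by exhibiting a size-preserving reduction from 3DM to X3C. The key observation is that 3DM is essentially a special case of X3C: a 3DM instance with disjoint ground sets $W, X, Y$ and triple collection $M \subseteq W \times X \times Y$ can be read directly as an X3C instance by setting the universe $U = W \cup X \cup Y$ (taking the three sets to be disjoint) and letting the collection of $3$-sets be $\{\{w, x, y\} : (w, x, y) \in M\}$. Since each triple contributes exactly one $3$-subset of $U$, and since distinct triples differ in at least one coordinate while $W, X, Y$ are disjoint (so the $W$-, $X$-, and $Y$-components of the resulting set are determined by the triple), distinct triples yield distinct sets. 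Thus the resulting X3C instance has exactly $m = |M|$ sets over a universe of size $|W| + |X| + |Y|$.

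First I would verify the correctness of the correspondence. A subcollection $M' \subseteq M$ is a perfect matching — i.e., every element of $W$, $X$, and $Y$ appears in exactly one triple of $M'$ — precisely when the corresponding subfamily of $3$-sets covers each element of $U$ exactly once, that is, when it is an exact cover. This equivalence is immediate from the disjointness of $W, X, Y$: covering each element of $U$ once is the same as matching each element of each of the three coordinate sets exactly once. Hence the 3DM instance is a YES instance if and only if the constructed X3C instance is a YES instance.

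Next I would track the blow-up in instance size. The reduction is linear — in fact it is the identity on the underlying data, merely forgetting the tripartite labelling — so the number of $3$-sets in the X3C instance equals $m$, the number of triples. Consequently, any algorithm deciding X3C in time $2^{o(m)}$ (with $m$ the number of $3$-sets) would, composed with this constant-time reduction, decide 3DM in time $2^{o(m)}$, contradicting the theorem that 3DM admits no $2^{o(m)}$ algorithm under ETH. This establishes the corollary.

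The step I expect to require the most care is not the logical correctness — the matching/cover correspondence is transparent — but rather the bookkeeping that the size parameter $m$ is preserved \emph{linearly} across the reduction chain 3-SAT $\to$ 3DM $\to$ X3C. The whole point of the earlier linear-size 3-SAT to 3DM reduction was to keep $|M| \in \Theta(m)$; here I must ensure the final X3C instance still has $\Theta(m)$ sets, so that the $2^{o(m)}$ lower bound is not diluted into a weaker $2^{o(m^c)}$ bound for some $c < 1$. Because the present reduction is size-preserving up to a constant factor, this is guaranteed, and the lower bound transfers intact.
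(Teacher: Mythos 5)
Your proposal is correct and matches the paper's (implicit) argument: the paper states this corollary without proof, treating it as immediate from the preceding 3DM lower bound via the standard identity-like reduction that turns each triple $(w,x,y) \in M$ into the $3$-set $\{w,x,y\}$ over $U = W \cup X \cup Y$, which is exactly the reduction you spell out. Your added care about the linear preservation of the parameter $m$ is precisely the point the paper's construction is designed to guarantee, so the lower bound transfers as you claim.
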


Hence, from the discussion in this section, we know that X3C is not only NP-Hard but the running time is lower bounded by $\Omega(2^m)$. Now, using the same reduction of X3C to PCC as before, gives the same lower bound on the running time of PCC. Using this, we can now lower bound the number of queries required by PCC.

For the sake of contradiction, let us assume that there exists an algorithm which solves PCC in polynomial time by making $o(n)$ same-cluster queries ($n$ is the number of vertices). Then by simulating all possible answers for the oracle, we get a non-query algorithm which solves PCC in $2^{o(n)}$. However, combining Cor. \ref{cor:X3CLowerBound} with the reduction of X3C to PCC, we get that any algorithm that solves PCC takes $\Omega(2^n)$. Hence, no such query algorithm exists. 

\section{Sampling positive pairs}
\weightedPosApproxUniform*
\begin{proof}
Let $K = \{(x, y):d(x, y)\le \lambda\}$ and  $D$ be a distribution over $K$ defined by $D(x, y) := \frac{|S_x|}{\sum_{x'} |S_{x'}|} . \frac{1}{|S_x|} = \frac{U^2(x,y)}{U^2(K)}$. Let $K^+ = \{(x, y) : d(x, y) \le \lambda$ and $C^*(x, y) = 1\}$. Let $T$ be the distribution induced by $\mc P_{11}$. It's easy to see that for $(x, y) \not\in K^+$, $T(x, y) = 0$. For $(x, y) \in K^+$, let $D(x, y) = p$ and $D(K^+) = q$. Then, $T(x, y) = p + (1-q)p + \ldots = \frac{p}{q} = \frac{D(x, y)}{D(K^+)} = \frac{U^2(x, y)}{U^2(K^+)}$. Using Defn. \ref{defn:informativeMetric}, we know that 
\begin{align*}
  &1-\alpha \enspace \le \underset{(x, y) \sim U^2}{\mb P} [d(x, y) \le \lambda \enspace|\enspace C^*(x, y) = 1] = \frac{\underset{(x, y) \sim U^2}{\mb P} [ d(x, y) \le \lambda, C^*(x, y) = 1]}{\underset{(x, y) \sim U^2}{\mb P} [C^*(x, y) = 1]} = \frac{U^2(K^+)}{U^2(X^{[2]+})}\numberthis\label{eqn:wtIneq}
\end{align*}
Now, we will use the above inequality to prove our result. 
\begin{flalign*}
  &\underset{(x, y) \sim T}{\mb P}\enspace \big[ h(x, y) = 0 ] = \underset{(x, y)\in K^+}{\sum} T(x, y) \mb 1_{h(x, y) = 0} =\underset{(x, y) \in K^+}{\sum} \frac{U^2(x, y)}{U^2(K^+)} \mb 1_{h(x, y) = 0} \le\enspace \frac{1}{1-\alpha}\underset{(x, y) \in K^+}{\sum} \frac{U^2(x, y)}{U^2(X^{[2]+})}  \mb 1_{h = 0}\\
  & \le (1+2\alpha)\underset{(x, y) \in X^2_+}{\sum} P^+(x, y) \mb 1_{h(x, y) = 0} =\enspace (1+2\alpha)\underset{(x, y) \sim P^+}{\mb P}\enspace \big[ h(x, y) = 0 ]&
\end{flalign*}
Now, for the other direction, we have that 
\begin{align*}
  &\underset{(x, y) \sim P^+}{\mb P}\enspace \big[ h(x, y) = 0 ] = \underset{(x, y):X^{[2]+}}{\sum} P^+(x, y) \mb 1_{h(x, y) = 0} = \underset{ (x, y) \in K^+}{\sum} \frac{U^2(x, y)}{U^2(X^{[2]+})} \mb 1_{h(x, y) = 0} + \underset{(x, y)\in X^2_+ \setminus K^+}{\sum} \frac{U^2(x, y)}{U^2(X^{[2]+})} \mb 1_{h=0}\\
  & \le \underset{ (x, y) \in K^+}{\sum} \frac{U^2(x, y)}{U^2(K^+)} \mb 1_{h(x, y) = 0}  + \underset{(x, y) \in X^{[2]+} \setminus K^+}{\sum} \frac{U^2(x, y)}{U^2(X^{[2]+})} \mb 1_{h = 0}\\
  & \le \underset{(x, y) \sim T}{\mb P}\enspace \big[ h(x, y) = 0 ] + \underset{(x, y)\in X^{[2]+} \setminus K^+}{\sum} \frac{U^2(x, y)}{U^2(X^{[2]+})}\enspace\le\enspace  \underset{(x, y) \sim T}{\mb P}\enspace \big[ h(x, y) = 0 ] + \alpha
\end{align*}
Hence, we have shown that both the directions hold and this completes the proof of the lemma. Note that this shows that our sampling procedure approximates the distribution $P^+$. It is easy to see that pre-computing $S_x$ for all $x$ takes $|X|^2$ time. Once the pre-computation is done, the sampling can be done in constant time.
\end{proof}

\section{Sample and query complexity of RCC}
\sampleComplexity*
\begin{proof}
Let $T_0$ be the distribution induced by $\mc P_0$ and $T_1$ be the distribution induced by $\mc P_{11}$. Denote by $E(h) = \underset{(x, y) \sim P^+}{\mb P}\enspace \big[ h(x, y) = 0 ]$ and by $G(h) = \underset{(x, y) \sim P^-}{\mb P}\enspace \big[ h(x, y) = 1 ]$. 

Using Thm. \ref{thm:uniformConvergence}, we know that if $m_+ > a\frac{\vcdim({\mc F}) + \log(\frac{1}{\delta})}{\epsilon^2}$ then with probability atleast $1-\delta$, we have that for all $h$
\begin{align*}
  &|\hat E(h) - \underset{(x,y) \sim T_1}{\mb P}[h(x, y) = 0]| \le \epsilon \implies \hat E(h) \le \epsilon + \underset{(x,y) \sim T_1}{\mb P}[h(x, y) = 0] \le \epsilon + 2\alpha + E(h) \enspace\text{ and}\\
  &E(h) - 2\alpha -\epsilon \le \hat E(h) \numberthis \label{eqn:eHat}
\end{align*}
Note that we obtain upper and lower bounds for $\underset{(x,y) \sim T_1}{\mb P}[h(x, y) = 0]$ using Lemma \ref{lemma:weightedPosApproxUniform}. Similarly, if $m_- > a\frac{\vcdim({\mc F}) + \log(\frac{1}{\delta})}{\epsilon^2}$, then with probability atleast $1-\delta$, we have that for all $h$,
\begin{align*}
  &|\hat G(h) - \underset{(x,y) \sim T_0}{\mb P}[h(x, y) = 1]| \le \epsilon \implies \hat G(h) \le \epsilon + G(h) \enspace\text{ and } \enspace G(h) -\epsilon \le \hat G(h) \numberthis\label{eqn:gHat}
\end{align*}

\noindent Combining Eqns. \ref{eqn:eHat} and \ref{eqn:gHat}, we get that with probability atleast $1-2\delta$, we have that for all $C \in {\mc F}$
\begin{align*}
  &\hat L(C) \le \mu [\epsilon + E(h) + 2\alpha] + (1-\mu)(\epsilon + G(h)) \le L(h) + \epsilon + 2\alpha\\
  &\text{And} \enspace \hat L(C) \ge \mu(E(h) -\epsilon - \alpha) + (1-\mu)(G(h) - \epsilon) \ge L(h) - \epsilon - \alpha
\end{align*}
Now, let $\hat C$ be the output of $\mc A$ and let $\hat C^*$ be $\argmin_{C \in {\mc F}} L(C)$. Then, we have that with probability atleast $1-2\delta$
\begin{align*}
  L(\hat C) &\le \hat L(\hat C) + \alpha + \epsilon \le \hat L(\hat C^*) + \alpha + \epsilon \le L(\hat C^*) + 2\epsilon + 3\alpha 
\end{align*}

Choosing $\epsilon = \frac{\epsilon}{2}$ and $\delta = \frac{\delta}{2}$ throughout gives the result of the theorem.
\end{proof}

\queryComplexity*
\begin{proof}
Let $q_+$ denote the number queries to sample the set $S_+$. We know that $\mb E[q_+] \le \frac{1}{\beta}$. Given that the expectation is bounded as above, using Thm. \ref{thm:geometricRV}, we get that $q_+ \le \frac{(1+\nu)m_+}{\beta(1-\epsilon)}$ with probability atleast $1-\exp(\frac{-\nu^2m_+}{4})$. Similarly, we get that with probability atleast $1-\exp(\frac{-\nu^2m_-}{4})$, $q_- \le \frac{(1+\nu)m_-}{(1-\gamma)}$.
\end{proof}

\subsection{VC-dimension of common classes}
\VCDim*
\begin{proof}
Let $n$ be as defined in the statement of the theorem. Let $M^2 \subseteq \mc X^2$ be a set of size $> n$. Define $M := \{x: (x, y) \in M^2 \text{ or } (y, x) \in M^2\}$. We know that $|M| > \sqrt n$. The number of clusterings (partitions) on $n$ elements is given by the $n^{th}$ bell number. Thus, for $s \le B_{\sqrt n}$ there exists a clustering $C' \not\in \mc F$ of the set $\mc X$. Hence, $l_{\mc F}$ can't shatter any set of size $> n$.
\end{proof}

\begin{lem}
\label{lemma:treesOnX}
Let $\mc X$ be a finite set, $S \subseteq \mc X$ be a set of $n$ points and $T$ be any hierarchical clustering tree of $\mc X$. There exists a set $\mc C = \{C_1, \ldots, C_s\}$ where each $C_i$ is a clustering of $S$ with the following properties
\begin{itemize} 
	\item $|\mc C| \ge \frac{n!}{\lfloor n/2 \rfloor! \enspace 2^{\lfloor n/2 \rfloor}}$
	\item $T$ contains atmost one clustering from $\mc C$. 
\end{itemize}
\end{lem}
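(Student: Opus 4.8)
The plan is to take $\mc C$ to be the collection of all \emph{near-perfect matchings} of $S$: clusterings in which every cluster has exactly two points, except for a single singleton when $n$ is odd. A standard count shows there are exactly $\frac{n!}{\lfloor n/2 \rfloor! \, 2^{\lfloor n/2 \rfloor}}$ such clusterings (when $n$ is odd, choose the unmatched point, then pair up the remaining even-sized set; when $n$ is even, just pair everything up), which gives the required lower bound on $|\mc C|$. It then remains to argue that $T$ contains at most one member of $\mc C$, and this is where the structure of hierarchical trees enters.

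First I would make precise what it means for $T$ to ``contain'' a clustering of $S$. Every pruning of $T$ is a frontier of nodes whose leaf-sets partition $\mc X$; restricting each such leaf-set to $S$ yields the clustering of $S$ that this pruning represents. Writing $\mc L_S = \{\,\mathrm{leaves}(v)\cap S : v \text{ a node of } T\,\}$, I would observe that $\mc L_S$ is a \emph{laminar} family: leaf-sets of tree nodes are pairwise nested or disjoint, and intersecting each of them with $S$ preserves this property. Hence every cluster of every clustering of $S$ that $T$ contains is a member of $\mc L_S$.

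The crux is the following observation about the two-element members of $\mc L_S$. If $\{a,b\}$ and $\{a,c\}$ both lie in $\mc L_S$ and share the point $a$, then the two underlying node leaf-sets are not disjoint, so by laminarity one is nested in the other; intersecting with $S$ forces $\{a,b\}\subseteq\{a,c\}$ or vice versa, and since both sets have size two we conclude $b=c$. Thus the two-element members of $\mc L_S$ are pairwise disjoint, i.e.\ they form a single partial matching of $S$. Any clustering from $\mc C$ that $T$ contains must therefore draw its size-two clusters from this one fixed collection of disjoint pairs, and I would then check that covering $S$ (up to a single allowed singleton) by disjoint pairs taken from a fixed disjoint family can be done in at most one way. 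Concretely, letting $S'$ be the union of all two-element members of $\mc L_S$, a near-perfect matching is realizable only if $|S\setminus S'|\le 1$, and in that case each matched point is forced into its unique available pair, so the matching — and the position of the singleton — is completely determined. Hence $T$ contains at most one clustering from $\mc C$.

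The main obstacle I anticipate is this final uniqueness step, and in particular the odd case: one must rule out ``sliding'' the singleton onto a different point. The clean way to do this is exactly the disjointness established above — dropping a pair in order to free some point $w$ as the singleton leaves $w$'s unique partner unmatched as well, producing two singletons and violating membership in $\mc C$. The other place demanding care is the bookkeeping of the pruning-to-subset correspondence, ensuring that restricting a pruning of the tree on $\mc X$ genuinely produces clusters lying in the laminar family $\mc L_S$, so that the laminarity argument applies to clusterings of $S$ rather than of $\mc X$.
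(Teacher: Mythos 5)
Your proposal is correct and follows essentially the same route as the paper: the same family of near-perfect matchings with the same count, and the same key use of laminarity of the tree's node leaf-sets (the paper phrases it as a contradiction from a triple $s_1, s_2, s_3$ witnessing the difference of two matchings, realized by two nodes that can be neither nested nor disjoint --- which is exactly your observation that two two-element restricted leaf-sets sharing a point must coincide). If anything, your handling of the odd case and the forced placement of the singleton is slightly more explicit than the paper's.
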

\begin{proof}
Consider clusterings $C_i$ of $S$ of the following type. Each cluster in $C_i$ contains exactly two points (except possibly one cluster which contains one point if $n$ is odd). One such clustering along with a tree $T$is shown in Fig. \ref{fig:treeStructure}. Let $\mc C$ be the set of all such clusterings $C_i$. The number of such clusterings $|\mc C|$ is 
$$ \begin{dcases}
		\frac{n!}{2^{\frac{n-1}{2}}\frac{n-1}{2}!} \hspace{0.5in} n \text{ is odd}\\
        \frac{n!}{2^{\frac{n}{2}}\frac{n}{2}!} \hspace{0.78in} n \text{ is even}\\
	\end{dcases}= \frac{n!}{2^{\lfloor \frac{n}{2} \rfloor} (\lfloor \frac{n}{2} \rfloor)!}$$
	
For the sake of contradiction, assume that $T$ is a hierarchical clustering tree $T$ of $\mc X$ which contains $C_i$ and $C_j$. Since $C_i \neq C_j$, there exists points $s_1, s_2$ and $s_3$ such that the following happens. (i) $s_1, s_2$ are in the same cluster in $C_i$. $s_2, s_3$ as well as $s_1, s_3$ are in different clusters in $C_i$. (ii) $s_1, s_3$ are in the same cluster in $C_j$. $s_2, s_3$ as well as $s_1, s_2$ are in different clusters in $C_j$.   

Now, $T$ contains $C_i$. Hence, there exists a node $v$ such that $s_1, s_2 \in C(v)$ but $s_3 \not\in C(v)$. $T$ also contains $C_j$. Hence, there exists a node $u$ such that $s_1, s_3 \in C(u)$ and $s_2 \not\in C(u)$. Both $u$ and $v$ contain the point $s_1$. Hence, either $u$ is a descendant of $v$ or the other way around. Observe that $s_2 \in C(v)$ but $s_2 \not\in C(u)$. Hence, $v$ is not a descendant of $u$. Similarly, $s_3 \in C(u)$ and $s_3 \not\in C(v)$ so $u$ is not a descendant of $v$. This leads to a contradiction. Hence, no such tree $T$ can exist.  
\begin{figure}
	\centering
	\begin{tikzpicture}
		\node[circle,draw] at (0,0) {};
		
		\node[circle,draw] at (1, 0){};
		\node[circle,draw,fill=black] at (1.5, 1){};
		\draw (1.5, 1) --(1.95, 0.2);
		\draw (1.5, 1) --(1.05, 0.2);
		\node[circle,draw] at (2,0) {};
		
		\node[circle,draw,fill=black] at (2.5, 2){};
		\draw (2.5, 2) --(1.5, 1);
		\draw (2.5, 2) --(3.5, 1);

		\node[circle,draw] at (3, 0){};
		\node[circle,draw,fill=black] at (3.5, 1){};
		\draw (3.5, 1) --(3.95, 0.2);
		\draw (3.5, 1) --(3.05, 0.2);
		\node[circle,draw] at (4,0) {};
		
		\node[circle,draw,fill=black] at (4.5, 3){};
		\draw (4.5, 3) --(2.5, 2);
		\draw (4.5, 3) --(6.5, 2);

		\node[circle,draw,fill=black] at (4.5, 5){};
		\draw (4.5, 5) --(4.5, 3);
		\draw (4.5, 5) --(0, 0);

		\node[circle,draw] at (5, 0){};
		\node[circle,draw,fill=black] at (5.5, 1){};
		\draw (5.5, 1) --(5.95, 0.2);
		\draw (5.5, 1) --(5.05, 0.2);
		\node[circle,draw] at (6,0) {};
		
		\node[circle,draw,fill=black] at (6.5, 2){};
		\draw (6.5, 2) --(5.5, 1);
		\draw (6.5, 2) --(7.5, 1);

		\node[circle,draw] at (7, 0){};
		\node[circle,draw,fill=black] at (7.5, 1){};
		\draw (7.5, 1) --(7.95, 0.2);
		\draw (7.5, 1) --(7.05, 0.2);
		\node[circle,draw] at (8, 0){};
		
		\draw[dotted] (0, 1.5) -- node[above] {$C_i$} (8, 1.5);
	\end{tikzpicture}
\caption{A hierarchical clustering tree of $n=9$ points. This tree contains the clustering $C_i$ described in the proof of Lemma \ref{lemma:treesOnX}.}
\label{fig:treeStructure}
\end{figure}
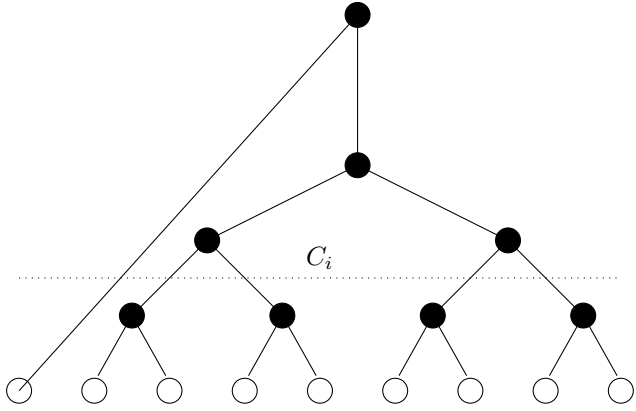
\end{proof}

\VCDimT*
\begin{proof}
Let $n$ be as defined in the statement of the theorem. Let $M^2 \subseteq \mc X^2$ be a set of size $> n^2$. Define $M := \{x: (x, y) \in M^2 \text{ or } (y, x) \in M^2\}$. We know that $|M| > n$. Using lemma \ref{lemma:treesOnX}, there exists a set of clusterings $\mc C = \{C_1, \ldots, C_{s'}\}$ of size $s' > \frac{n!}{\lfloor n/2 \rfloor! \enspace 2^{\lfloor n/2 \rfloor}} \ge s$ such that each $T_i \in \mc F$ contains atmost one $C_j \in \mc C$. Thus, there exists a clustering $C_j$ which is not captured by any $T_i \in \mc F$. Hence, $l_{\mc F}$ can't shatter any set of size $> n^2$.
\end{proof}

\section{Technical lemmas and theorems}

\begin{theorem}[Multiplicative Chernoff bound \cite{mitzenmacher2005probability}]
\label{thm:chernoff}
Let $X_1, \ldots, X_n$ be i.i.d random variables in $\{0, 1\}$ such that $\mu = E[X_i]$. Let $X = \frac{\sum X_i}{n}$. Then for any $0 < \epsilon < 1$
$$P\big[ \enspace X > (1+\epsilon) \mu\enspace\big] \enspace\le\enspace \exp\bigg(\frac{-\epsilon^2\mu n}{3}\bigg)$$
\end{theorem}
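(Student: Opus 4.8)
The plan is to apply Chernoff's exponential-moment method. First I would fix a parameter $t > 0$ and write $S = \sum_{i=1}^n X_i$, so that the event $\{X > (1+\epsilon)\mu\}$ coincides with $\{S > (1+\epsilon)\mu n\}$. Since $x \mapsto e^{tx}$ is increasing, applying Markov's inequality to the nonnegative random variable $e^{tS}$ gives
\[
P[S > (1+\epsilon)\mu n] = P\big[e^{tS} > e^{t(1+\epsilon)\mu n}\big] \le \frac{E[e^{tS}]}{e^{t(1+\epsilon)\mu n}}.
\]

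Next I would exploit independence. Because the $X_i$ are i.i.d., the moment generating function factorizes as $E[e^{tS}] = \prod_{i=1}^n E[e^{tX_i}]$. For a single $\{0,1\}$-valued variable with mean $\mu$ we have $E[e^{tX_i}] = (1-\mu) + \mu e^t = 1 + \mu(e^t - 1)$, and the elementary inequality $1 + x \le e^x$ bounds this by $e^{\mu(e^t - 1)}$. Hence $E[e^{tS}] \le e^{n\mu(e^t - 1)}$, which yields
\[
P[S > (1+\epsilon)\mu n] \le \exp\big(n\mu(e^t - 1 - t(1+\epsilon))\big).
\]
I would then minimize the exponent over $t > 0$. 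Differentiating $e^t - 1 - t(1+\epsilon)$ shows the optimal choice is $t = \ln(1+\epsilon) > 0$, and substituting it gives the classical closed form
\[
P[S > (1+\epsilon)\mu n] \le \left(\frac{e^\epsilon}{(1+\epsilon)^{1+\epsilon}}\right)^{n\mu}.
\]

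The remaining, and genuinely delicate, step is the scalar inequality that converts this expression into the clean exponential bound: for $0 < \epsilon < 1$,
\[
\frac{e^\epsilon}{(1+\epsilon)^{1+\epsilon}} \le e^{-\epsilon^2/3}.
\]
Taking logarithms, this is equivalent to $g(\epsilon) := \epsilon - (1+\epsilon)\ln(1+\epsilon) + \tfrac{\epsilon^2}{3} \le 0$. I would prove it by a derivative argument: one computes $g(0) = 0$ and $g'(\epsilon) = \tfrac{2\epsilon}{3} - \ln(1+\epsilon)$, so it suffices to check $\ln(1+\epsilon) \ge \tfrac{2\epsilon}{3}$ on $(0,1)$. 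The function $h(\epsilon) = \ln(1+\epsilon) - \tfrac{2\epsilon}{3}$ satisfies $h(0) = 0$, rises until $\epsilon = \tfrac{1}{2}$ where $h'(\epsilon) = \tfrac{1}{1+\epsilon} - \tfrac{2}{3}$ changes sign, and then decreases to $h(1) = \ln 2 - \tfrac{2}{3} > 0$; being unimodal with nonnegative values at both endpoints, $h \ge 0$ throughout, so $g' \le 0$ and $g(\epsilon) \le g(0) = 0$. Raising both sides of the scalar inequality to the power $n\mu$ then produces exactly the claimed bound $\exp(-\epsilon^2 \mu n / 3)$.

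The main obstacle is precisely this last scalar estimate rather than the probabilistic machinery, which is routine. The hypothesis $\epsilon < 1$ is essential here, and the constant $\tfrac{1}{3}$ in the exponent is tuned to make the inequality hold on exactly this range; the cleanest route is the Taylor/derivative analysis of $g$ above, since trying to manipulate the $(1+\epsilon)^{1+\epsilon}$ term algebraically is considerably messier.
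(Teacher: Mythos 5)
Your proof is correct: the exponential-moment/Markov argument, the optimization at $t=\ln(1+\epsilon)$, and the scalar estimate $\epsilon-(1+\epsilon)\ln(1+\epsilon)\le-\epsilon^2/3$ on $(0,1)$ (via the unimodality of $h(\epsilon)=\ln(1+\epsilon)-\tfrac{2\epsilon}{3}$ with $h(1)=\ln 2-\tfrac{2}{3}>0$) are all sound. The paper gives no proof of this statement -- it is quoted as a standard result from \cite{mitzenmacher2005probability} -- and your argument is essentially the textbook derivation found there, so there is nothing to reconcile.
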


\begin{theorem}[Multiplicative Chernoff bound \cite{mitzenmacher2005probability}]
\label{thm:chernoff2}
Let $X_1, \ldots, X_n$ be i.i.d random variables in $\{0, 1\}$ such that $\mu = E[X_i]$. Let $X = \frac{\sum X_i}{n}$. Then for any $0 < \epsilon < 1$
$$P\big[ \enspace X < (1-\epsilon) \mu\enspace\big] \enspace\le\enspace \exp\bigg(\frac{-\epsilon^2\mu n}{2}\bigg)$$
\end{theorem}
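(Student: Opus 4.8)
The plan is to prove this lower-tail bound by the standard exponential-moment (Chernoff) method, mirroring the proof of Theorem \ref{thm:chernoff} but applying Markov's inequality to $e^{-tS}$ rather than to $e^{tS}$, since here we are controlling a \emph{lower} deviation. First I would pass from the average $X$ to the sum $S = \sum_{i=1}^n X_i$, noting that the event $\{X < (1-\epsilon)\mu\}$ is identical to the event $\{S < (1-\epsilon)\mu n\}$. Then, for any fixed $t > 0$, since $x \mapsto e^{-tx}$ is decreasing we have $\{S < (1-\epsilon)\mu n\} = \{e^{-tS} > e^{-t(1-\epsilon)\mu n}\}$, and Markov's inequality gives
\begin{align*}
  P\big[ S < (1-\epsilon)\mu n \big] \le e^{t(1-\epsilon)\mu n}\, E\big[e^{-tS}\big].
\end{align*}

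Next I would evaluate the moment generating factor using independence. Because the $X_i$ are i.i.d., $E[e^{-tS}] = \big(E[e^{-tX_1}]\big)^n$, and for a $\{0,1\}$-valued variable with mean $\mu$ we have $E[e^{-tX_1}] = 1 + \mu(e^{-t}-1)$. Applying the elementary inequality $1 + u \le e^u$ with $u = \mu(e^{-t}-1)$ yields $E[e^{-tS}] \le \exp\big(n\mu(e^{-t}-1)\big)$, hence
\begin{align*}
  P\big[ S < (1-\epsilon)\mu n \big] \le \exp\Big( n\mu\big[(e^{-t}-1) + t(1-\epsilon)\big]\Big).
\end{align*}
I would then minimize the bracketed exponent over $t > 0$; differentiating in $t$ gives the optimal choice $e^{-t} = 1-\epsilon$, i.e. $t = -\ln(1-\epsilon)$, which is positive precisely because $0 < \epsilon < 1$. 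Substituting produces the classical form
\begin{align*}
  P\big[ X < (1-\epsilon)\mu \big] \le \left( \frac{e^{-\epsilon}}{(1-\epsilon)^{1-\epsilon}} \right)^{n\mu}.
\end{align*}

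The final and most delicate step is to simplify this into the clean claimed bound, which reduces to the scalar inequality $-\epsilon - (1-\epsilon)\ln(1-\epsilon) \le -\tfrac{\epsilon^2}{2}$ for all $\epsilon \in (0,1)$; this is exactly where the denominator $2$ (as opposed to the $3$ appearing in the upper-tail Theorem \ref{thm:chernoff}) comes from. I would establish the equivalent inequality $f(\epsilon) := \epsilon + (1-\epsilon)\ln(1-\epsilon) - \tfrac{\epsilon^2}{2} \ge 0$ by a short calculus argument: $f(0)=0$, the derivative is $f'(\epsilon) = -\ln(1-\epsilon) - \epsilon$ with $f'(0)=0$, and the second derivative $f''(\epsilon) = \tfrac{\epsilon}{1-\epsilon} \ge 0$ on $[0,1)$, so $f'$ is nonnegative and $f$ is nondecreasing from $0$, giving $f \ge 0$; equivalently one may use the series $-\ln(1-\epsilon) = \sum_{k \ge 1} \epsilon^k/k$ and compare term by term. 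The only genuine obstacle is pinning down this last inequality with the correct constant, since every preceding step is routine. Taking logarithms of the classical bound and invoking this inequality converts the exponent into $-\tfrac{\epsilon^2 \mu n}{2}$, which completes the proof.
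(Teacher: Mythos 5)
Your proof is correct: the passage to the sum $S=\sum_i X_i$, the application of Markov's inequality to $e^{-tS}$, the MGF bound $E[e^{-tX_1}] = 1+\mu(e^{-t}-1)\le e^{\mu(e^{-t}-1)}$, the optimal choice $t=-\ln(1-\epsilon)$, and the closing calculus inequality $\epsilon+(1-\epsilon)\ln(1-\epsilon)\ge\tfrac{\epsilon^2}{2}$ (via $f(0)=f'(0)=0$ and $f''(\epsilon)=\tfrac{\epsilon}{1-\epsilon}\ge 0$) are all sound and complete. Note that the paper itself gives no proof of this statement --- it is imported as a technical tool directly from \cite{mitzenmacher2005probability} --- and your argument is essentially the standard exponential-moment proof found in that reference, so there is nothing to reconcile.
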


\begin{theorem}[Vapnik and Chervonenkis \cite{vapnik2015uniform}]
\label{thm:vceapprox}
Let $X$ be a domain set and $D$ a probability distribution over $X$. Let $H$ be a class of subsets of $X$ of finite VC-dimension $d$. Let $\epsilon, \delta \in (0,1)$. Let $S \subseteq X$ be picked i.i.d according to $D$ of size $m$. If $m > \frac{c}{\epsilon^2}(d\log \frac{d}{\epsilon}+\log\frac{1}{\delta})$, then  with probability $1-\delta$ over the choice of $S$, we have that $\forall h \in H$
$$\bigg|\frac{|h\cap S|}{|S|} - P(h)\bigg| < \epsilon$$
\end{theorem}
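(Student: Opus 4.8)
The plan is to establish this classical uniform convergence bound through the standard two-stage symmetrization argument, reducing the deviation against the unknown measure $P$ to a combinatorial problem on a finite ghost sample. Writing $\hat P_S(h) = |h \cap S|/|S|$ for the empirical frequency, the goal is to bound $\mb P\big[\sup_{h \in H} |\hat P_S(h) - P(h)| > \epsilon\big]$ by $\delta$. First I would introduce a ghost sample $S'$ of the same size $m$ drawn i.i.d.\ from $D$, independent of $S$, and prove the symmetrization inequality
$$\mb P\Big[\sup_{h} |\hat P_S(h) - P(h)| > \epsilon\Big] \le 2\, \mb P\Big[\sup_{h} |\hat P_S(h) - \hat P_{S'}(h)| > \tfrac{\epsilon}{2}\Big],$$
which holds once $m \ge 2/\epsilon^2$: for any fixed bad set $h$ witnessing the event on the left, a one-sided Chebyshev (or Chernoff, using Thm.\ \ref{thm:chernoff}) estimate shows the ghost frequency $\hat P_{S'}(h)$ lands within $\epsilon/2$ of $P(h)$ with probability at least $1/2$, so the ghost sample ``catches'' the discrepancy at half the cost.

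Next I would condition on the combined multiset $S \cup S'$ of $2m$ points and exploit exchangeability: the joint law of $(S, S')$ is invariant under independently swapping the $i$-th point of $S$ with the $i$-th point of $S'$ for each $i$. Under this random sign assignment a fixed $h$ yields a large two-sample discrepancy with probability at most $\exp(-\epsilon^2 m / 8)$ by a Hoeffding/Chernoff bound on the signed sum. The crucial gain is that, once the $2m$ points are frozen, the supremum over $h \in H$ only sees the restriction of $H$ to these points, of which there are at most the growth function $\Pi_H(2m)$ many; a union bound then gives
$$\mb P\Big[\sup_{h} |\hat P_S(h) - \hat P_{S'}(h)| > \tfrac{\epsilon}{2}\Big] \le \Pi_H(2m)\, \exp\Big(-\frac{\epsilon^2 m}{8}\Big).$$

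Then I would invoke the Sauer--Shelah lemma, which for a class of VC-dimension $d$ bounds $\Pi_H(2m) \le (2em/d)^d$ whenever $m \ge d$, so the overall failure probability is at most $2(2em/d)^d \exp(-\epsilon^2 m/8)$. Finally I would set this $\le \delta$, take logarithms, and solve for $m$: the combinatorial term $\log \Pi_H(2m) \approx d\log(m/d)$ must be absorbed by $\epsilon^2 m/8$, and since $d\log(m/d)$ grows sublinearly in $m$ a short calculation produces the stated threshold $m > \frac{c}{\epsilon^2}\big(d\log\frac{d}{\epsilon} + \log\frac{1}{\delta}\big)$ for a suitable universal constant $c$. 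I expect the main obstacle to be the symmetrization step: the ghost-sample inequality requires the auxiliary bound $\mb P[\,|\hat P_{S'}(h) - P(h)| \le \epsilon/2\,] \ge 1/2$ to be applied carefully within the conditioning, and the swap argument must be set up so that the two samples decouple correctly. By contrast, Sauer's lemma and the final arithmetic of solving the exponential inequality for $m$ are routine.
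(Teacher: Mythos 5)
There is nothing in the paper to compare your argument against: Theorem \ref{thm:vceapprox} is never proved there. It sits in the appendix's collection of technical results and is imported as a black-box classical theorem with a citation to Vapnik and Chervonenkis (in fact the paper's own analysis of Algorithm \ref{alg:ERM} invokes the closely related Theorem \ref{thm:uniformConvergence} rather than this one). Your sketch is the standard textbook proof of that classical result --- ghost-sample symmetrization, the random-swap exchangeability argument conditioned on the combined $2m$ points, a union bound over the at most $\Pi_H(2m)$ restrictions of $H$, Sauer--Shelah, and then solving the resulting inequality for $m$ --- and it is structurally sound; it is essentially the argument in the cited source. If you were to write it out in full, the remaining work is routine but worth flagging: the Hoeffding step actually yields $2\exp(-\epsilon^2 m/8)$ (constants and the factor $2$ get absorbed into $c$); the final step of absorbing $d\log(2em/d)$ into $\epsilon^2 m/8$ needs the usual self-bounding trick (e.g.\ $\log a \le ab - \log b - 1$ for $a,b>0$), which succeeds here because the stated threshold carries the slack term $d\log(d/\epsilon) \ge d\log(1/\epsilon)$; and your symmetrization precondition $m \ge 2/\epsilon^2$ must be checked against the stated hypothesis, which holds after enlarging $c$ outside of degenerate parameter ranges where the conclusion is vacuous. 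None of these is a genuine gap.
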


\begin{theorem}[Fundamental theorem of learning \cite{blumer1989learnability}] 
\label{thm:uniformConvergence}
Here, we state the theorem as in the book \cite{shalev2014understanding}. Let $H$ be a class of functions $h:\mc X \rightarrow \{0, 1\}$ of finite VC-Dimension, that is $\vcdim(H) = d < \infty$. Let $D$ be a probability distribution over $X$ and $h^*$ be some unknown target function. Given $\epsilon, \delta \in (0, 1)$. Let $err_D$ be the $\{0, 1\}$-loss function $err: H \rightarrow [0, 1]$. That is $err_D(h) = \underset{x \in D}{\mb P}[h(x) \neq h^*(x)]$. Sample a set $S = \{(x_1, y_1), \ldots, (x_m, y_m)\}$ according to the distribution $D$. Define $err_S(h) = \sum_{i=1}^{m} \frac{\mb 1_{[h(x_i) \neq h^*(x_i)]}}{m}$. If $m \ge a\frac{d + \log (1/\delta)}{\epsilon^2}$, then with probability atleast $1-\delta$ over the choice of $S$, we have that for all $h \in H$
$$|err_D(h) - err_S(h)| \le \epsilon$$
where $a$ is an absolute global constant. 
\end{theorem}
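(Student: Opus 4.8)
The statement is the agnostic uniform-convergence bound for a hypothesis class of finite VC dimension, with the optimal sample complexity that is \emph{linear} in $d$. The plan is to establish the two-sided uniform deviation guarantee by the classical three-step route---symmetrization, reduction to finitely many hypotheses via Sauer--Shelah, and concentration---while taking care to remove the spurious logarithmic factor so as to reach $m \ge a\,(d+\log(1/\delta))/\epsilon^2$ rather than the weaker $m \ge c\,(d\log(d/\epsilon)+\log(1/\delta))/\epsilon^2$ already recorded in Theorem~\ref{thm:vceapprox}.

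First I would symmetrize. Drawing an independent ``ghost'' sample $S'$ of the same size $m$ and writing $\Delta(h)=err_D(h)-err_S(h)$, a standard argument (using that the ghost empirical error concentrates around $err_D$) gives
\[
P\Big[\sup_{h\in H}|\Delta(h)|>\epsilon\Big]\ \le\ 2\,P\Big[\sup_{h\in H}\big|err_S(h)-err_{S'}(h)\big|>\tfrac{\epsilon}{2}\Big].
\]
The right-hand event depends on $H$ only through its restriction to the $2m$ points of $S\cup S'$, so only finitely many effective hypotheses survive. By the Sauer--Shelah lemma their number is at most the growth function $\Pi_H(2m)\le(2em/d)^d$. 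Introducing independent sign variables $\sigma_i$ that randomly swap the $i$-th coordinates of $S$ and $S'$, for each fixed effective hypothesis the quantity $err_S(h)-err_{S'}(h)$ becomes an average of bounded, symmetric, mean-zero terms, to which Hoeffding's inequality applies.

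The hard part will be the final step. A direct union bound over the $\Pi_H(2m)$ effective hypotheses, followed by Hoeffding, produces a tail of the form $4\,\Pi_H(2m)\exp(-m\epsilon^2/8)$; inverting this forces $m \ge c\,(d\log(m/d)+\log(1/\delta))/\epsilon^2$, and the parasitic $\log(m/d)$ factor survives---exactly the gap between the desired bound and Theorem~\ref{thm:vceapprox}. To close this gap I would instead bound the (empirical) Rademacher complexity of the projected loss class by Dudley's chaining / entropy-integral argument, using Haussler's packing bound for VC classes to control the covering numbers at each scale; this yields
\[
E_\sigma\Big[\sup_{h}\tfrac1m\sum_i\sigma_i\,\mb 1_{[h(x_i)\neq h^*(x_i)]}\Big]\ \le\ C\sqrt{d/m}
\]
with \emph{no} logarithmic factor. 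Feeding this expectation bound into a bounded-differences (McDiarmid) inequality upgrades it to a high-probability statement $\sup_h|\Delta(h)|\le C\sqrt{d/m}+\sqrt{2\log(2/\delta)/m}$; setting the right-hand side equal to $\epsilon$ and solving for $m$ delivers the claimed bound $m\ge a\,(d+\log(1/\delta))/\epsilon^2$ with an absolute constant $a$. The chaining estimate is the genuinely delicate ingredient, since it is precisely what removes the $\log(1/\epsilon)$ factor that the elementary union bound cannot avoid.
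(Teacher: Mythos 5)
The paper contains no proof of this statement: it is imported verbatim from the literature (Blumer et al., stated as in the Shalev-Shwartz--Ben-David book) and is used only as a black box in the proof of Theorem~\ref{thm:sampleComplexity}, so there is no in-paper argument to compare yours against. Judged on its own, your sketch is a correct outline of how the log-free bound is actually obtained. You rightly observe that the elementary route (symmetrization, Sauer--Shelah, union bound, Hoeffding) only yields the weaker Theorem~\ref{thm:vceapprox} with its $d\log(d/\epsilon)$ term, and that the extra logarithm is removed by bounding the Rademacher complexity of the projected loss class via Dudley's chaining argument combined with Haussler's packing bound for VC classes, which makes the entropy integral $O(\sqrt{d/m})$ with no logarithmic factor, after which McDiarmid's inequality upgrades the expectation bound to the stated high-probability guarantee. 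Two details are worth making explicit when you write this out in full: the class you must chain over is the loss class $\{x \mapsto \mb 1_{[h(x)\neq h^*(x)]} : h \in H\}$, which has the same VC-dimension as $H$ because it is $H$ symmetric-differenced with a fixed set; and Haussler's bound must be invoked for covering numbers in $L_2$ of the empirical measure, uniformly over samples, which is precisely the form in which it holds. With those points filled in, your argument is the standard modern proof and is sound.
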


\begin{theorem}[Concentration inequality for sum of geometric random variables \cite{brown2011wasted}]
\label{thm:geometricRV}
Let $X = X_1 + \ldots + X_n$ be $n$ geometrically distributed random variables such that $\mb E[X_i] = \mu$. Then 
$$\mb P[X > (1+\nu)n\mu] \le \exp\bigg(\frac{-\nu^2\mu n}{2(1+\nu)}\bigg)$$
\end{theorem}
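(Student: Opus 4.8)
The plan is to prove this by the exponential-moment (Chernoff) method, and the cleanest route reuses the multiplicative lower-tail bound already stated as Theorem \ref{thm:chernoff2}, via the negative-binomial duality between sums of geometrics and binomials. Write $p = 1/\mu$ for the success probability, so that each $X_i \in \{1,2,\dots\}$ is the number of independent Bernoulli($p$) trials up to and including the first success and $\mb E[X_i] = 1/p = \mu$. The key observation is that $X = X_1 + \cdots + X_n$ is then exactly the number of Bernoulli($p$) trials needed to accumulate $n$ successes. Consequently, for an integer threshold $m$, the event $\{X > m\}$ is identical to the event that the first $m$ trials yield at most $n-1$ successes: if $Y \sim \mathrm{Binomial}(m,p)$ counts those successes, then $\{X > m\} = \{Y \le n-1\} \subseteq \{Y < n\}$.

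Next I would take $m = \lfloor (1+\nu)n\mu \rfloor$, so that $\{X > (1+\nu)n\mu\} \subseteq \{X > m\} \subseteq \{Y < n\}$ and $\mb E[Y] = mp \approx (1+\nu)n$. Since $n = \frac{1}{1+\nu}\mb E[Y] = (1-\epsilon)\mb E[Y]$ with $\epsilon = \frac{\nu}{1+\nu}$, the event $\{Y < n\}$ is a relative lower-tail deviation, and Theorem \ref{thm:chernoff2} applied to the $m$ Bernoulli indicators (its ``$n$'' being $m$, its ``$\mu$'' being $p$, its ``$\mu n$'' being $\mb E[Y]$) gives
$$\mb P[Y < (1-\epsilon)\mb E[Y]] \;\le\; \exp\!\Big(-\tfrac{\epsilon^2\,\mb E[Y]}{2}\Big) \;=\; \exp\!\Big(-\tfrac{\nu^2}{(1+\nu)^2}\cdot\tfrac{(1+\nu)n}{2}\Big) \;=\; \exp\!\Big(-\tfrac{\nu^2 n}{2(1+\nu)}\Big),$$
which is the asserted rate. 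A fully self-contained alternative is the direct Chernoff bound: compute $\mb E[e^{tX_i}] = \frac{pe^t}{1-(1-p)e^t}$ (for $t < \ln\frac{1}{1-p}$), factor $\mb E[e^{tX}]$ by independence, apply Markov to $e^{tX}$ at threshold $(1+\nu)n\mu$, optimize over $t$, and bound the resulting per-variable exponent below using the elementary inequality $\nu - \ln(1+\nu) \ge \frac{\nu^2}{2(1+\nu)}$ (valid for $\nu>0$); this yields the same exponent and, unlike the duality, sidesteps integrality.

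The main obstacle in the duality route is purely technical: controlling the floor in $m = \lfloor(1+\nu)n\mu\rfloor$. With the exact value $\epsilon = \frac{\nu}{1+\nu}$ the inclusion $\{Y<n\}\subseteq\{Y<(1-\epsilon)\mb E[Y]\}$ needs $m \ge (1+\nu)n\mu$, which the floor narrowly violates; I would instead take $\epsilon = 1 - n/\mb E[Y]$ exactly and check that the induced exponent $\frac{\epsilon^2 \mb E[Y]}{2}$ still exceeds $\frac{\nu^2 n}{2(1+\nu)}$ up to an $O(p)=O(1/\mu)$ correction (or simply assume $(1+\nu)n\mu \in \mb Z$, as the source likely does). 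A second point worth flagging is that every correct derivation produces the exponent $\frac{\nu^2 n}{2(1+\nu)}$ with \emph{no} factor $\mu$; a scaling check confirms this, since in the $p\to 0$ (Gamma) limit $\mb P[X > (1+\nu)n\mu]$ is independent of $\mu$, so the $\mu$ appearing in the displayed exponent cannot be present and is presumably a transcription slip --- consistent with the way the bound is invoked in Theorem \ref{thm:queryComplexity}, where the failure probabilities are written $\exp(-\nu^2 m_\pm/4)$ without any $\mu$.
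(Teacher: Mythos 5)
You should know at the outset that the paper contains no proof of this statement: Theorem \ref{thm:geometricRV} is quoted as a black box from \cite{brown2011wasted}, so there is no internal argument to compare yours against. Judged on its own merits, your proposal is correct, and both of your routes work. The negative-binomial duality (for integer $m$, $\{X > m\}$ is exactly a binomial lower-tail event, to which Theorem \ref{thm:chernoff2} applies) reuses machinery already stated in the paper, at the price of the floor bookkeeping you flag; as you note, it gives the clean exponent only up to an $O(1/\mu)$ additive loss unless one assumes $(1+\nu)n\mu$ is an integer. The direct MGF computation sidesteps this entirely, and your key elementary inequality is valid: setting $f(\nu) = \nu - \ln(1+\nu) - \frac{\nu^2}{2(1+\nu)}$, one has $f(0)=0$ and $f'(\nu) = \frac{\nu^2}{2(1+\nu)^2} \ge 0$, so $f \ge 0$ for $\nu > 0$, which turns the optimized exponent $n(\nu - \ln(1+\nu))$ into the stated form.

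Your closing observation deserves emphasis because it is a genuine correction rather than a quibble: the inequality as printed is false. Take $n=1$, $\nu=1$, and $\mu = 1/p$ large; then $\mb P[X > 2\mu] = (1-p)^{\lfloor 2\mu \rfloor} \rightarrow e^{-2}$ as $p \to 0$, whereas the printed right-hand side $\exp(-\mu/4) \rightarrow 0$. Your Gamma-limit scaling argument is exactly the right sanity check: no correct bound on this tail can have an exponent growing with $\mu$, and both of your derivations produce the correct exponent $-\frac{\nu^2 n}{2(1+\nu)}$. The slip is almost certainly notational: in the cited source, $\mu$ denotes the mean of the \emph{sum}, $n/p$, and the exponent there carries a factor $p\mu = n$; transcribing $\mu \mapsto n\mu$ while dropping the factor $p$ produces the spurious $\mu n$. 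As you point out, the way the theorem is actually invoked in Theorem \ref{thm:queryComplexity} --- failure probabilities $\exp(-\nu^2 m_{\pm}/4)$, with no dependence on $\mu$, $\beta$, or $\gamma$ --- is consistent only with the corrected exponent, via $2(1+\nu) \le 4$ for $\nu \le 1$.
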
 
\end{document}